\documentclass[lettersize,journal]{IEEEtran}

\usepackage[paperwidth=7.9in,paperheight=10.75in,left=0.5in,right=0.5in,top=0.7in,bottom=0.97in]{geometry}

\usepackage{cite}
\usepackage{amsfonts,latexsym,amssymb}
\usepackage{amsmath,amsthm}
\usepackage{graphicx}
\usepackage[linesnumbered, ruled]{algorithm2e}
\SetAlgoNlRelativeSize{-2}
\SetAlCapSkip{0.5em}
\usepackage{graphics}
\usepackage{enumerate}
\usepackage{epsfig}
\usepackage{cases}
\usepackage{color}
\usepackage{url}
\usepackage{bm}
\usepackage{dsfont}
\usepackage{multirow}
\usepackage{booktabs}
\usepackage{flushend}
\usepackage{graphicx}
\usepackage{subcaption}
\usepackage{caption} 
\usepackage{mathrsfs}
\usepackage{makecell}
\usepackage[colorinlistoftodos,prependcaption,textsize=tiny]{todonotes}

\newtheorem{proposition}{Proposition}
\newtheorem{theorem}{Theorem}
\theoremstyle{definition}
\newtheorem{definition}{Definition}

\renewcommand\color[1]{\relax}
\renewcommand\textcolor[2]{#2}
\color{black}

\def\BibTeX{{\rm B\kern-.05em{\sc i\kern-.025em b}\kern-.08em
    T\kern-.1667em\lower.7ex\hbox{E}\kern-.125emX}}

\IEEEoverridecommandlockouts
\begin{document}
\title{\huge 
Joint Optimization of Training and Inference in Federated Edge Learning via Constrained Multi-Objective Deep Reinforcement Learning
}

\author{Zhen Li,
        Jun Cai,~\textit{Senior Member, IEEE},
        Chao Yang,
        and
        Haoran Gao

\IEEEcompsocitemizethanks{
\IEEEcompsocthanksitem Zhen Li, Jun Cai (Corresponding author), and Haoran Gao are with the Department of Electrical and Computer Engineering, Concordia University, Montreal, QC, H3G 1M8, Canada. (E-mail: zhen\_li@ieee.org, jun.cai@concordia.ca, haoran.gao@mail.concordia.ca).
\IEEEcompsocthanksitem Chao Yang is with the School of Automation, Guangdong University of Technology, China. (E-mail: chyang513@gdut.edu.cn).
}}


\IEEEtitleabstractindextext{%
\begin{abstract}
Federated edge learning (FEEL) has recently emerged as a promising paradigm for achieving edge intelligence (EI) via enabling collaborative model training across edge devices while protecting data privacy.
In this paper, we put forth an online optimization framework that jointly manages federated training and inference on resource-constrained edge devices.
We introduce a tandem-queue-inspired conversion mechanism that bridges inference requests and training data, and further incorporate both data and model freshness into the accuracy formulation to capture temporal dynamics in real-world environments.
To maximize inference accuracy while minimizing latency and energy consumption, the mode selections, communication, and computation resource allocations of edge devices are jointly optimized.
We formulate this optimization as a multi-objective optimization problem, which is NP-hard and
further complicated by the online setting.
To address these challenges, we transform the problem into a multi-objective Markov decision process (MOMDP) and develop a \underline{c}onstrained \underline{m}ulti-\underline{o}bjective \underline{p}roximal \underline{p}olicy \underline{o}ptimization (C-MOPPO) algorithm.
Specifically, C-MOPPO first learns a set of policies with different preferences across three objectives, then leverages constrained policy optimization to enrich the Pareto front and obtain high-quality, dense solutions.
Extensive experiments demonstrate that C-MOPPO achieves well-balanced trade-offs among objectives and significantly outperforms
baselines under various system configurations.
\end{abstract}

\begin{IEEEkeywords}Federated edge learning, inference, resource allocation, multi-objective optimization, deep reinforcement learning.
\end{IEEEkeywords} }
\maketitle

\IEEEdisplaynontitleabstractindextext

\section{Introduction}
\IEEEPARstart{W}{ith}  the proliferation of privacy-preserving artificial intelligence (AI) applications, federated learning (FL) has emerged as a promising paradigm that enables collaborative model training across clients without exposing their raw data. 
Owing to this advantage, FL has attracted growing attention from both academia and industry, with applications spanning from the Internet of Things (IoT) to emerging generative AI (GenAI)~\cite{NguyenD, HuangX}.
Nevertheless, conventional FL frameworks rely heavily on centralized servers, suffering from high communication latency and limited scalability in delay-sensitive applications.
To address these challenges, the integration of FL with edge computing paradigms, known as federated edge learning (FEEL), has been widely adopted as a means to leverage the computational capabilities of edge devices to process data closer to the source~\cite{TaoM}.
Unlike traditional FL that focuses primarily on collaborative training, FEEL explicitly targets network edge deployment with distributed edge devices as clients.
This framework enables faster response, improved privacy, and reduced backbone network load, making it suitable for latency-sensitive and privacy-critical applications~\cite{LimW, DuanQ}.

In FEEL systems, edge devices, such as autonomous vehicles, smartphones, and IoT sensors, ordinarily operate under limited computational capacity, restricted communication bandwidth, and finite battery life.
Although recent studies~\cite{RenJ, SunY, CaoX} have overcome some constraints and improved learning efficiency, critical issues closely related to practical deployments remain insufficiently explored.
On the one hand, most existing works treat federated training and inference as separate processes, overlooking their coexistence and interaction in FEEL systems.
This coexistence contrasts with conventional FL, where clients first participate in training to build the global model, then perform inference once the fully-trained model is deployed.
However, edge devices in FEEL systems are required to participate in FL while also serving real-time inference requests from end users~\cite{HanP, LuoK}.
For example, in intelligent transportation systems (ITS), roadside sensing nodes such as cameras may collaboratively train neural network models via FL to improve object detection accuracy, while using current models to infer the locations of vehicles and pedestrians.
In this context, the mode of an edge device in each round corresponds to its working state: either training or inference. 
Training involves updating model parameters through FL, and inference involves processing requests with fixed model parameters.
Due to this fundamental difference, an edge device cannot perform both processes simultaneously within the same temporal window.
This necessitates the development of efficient mode selection mechanisms that can dynamically determine, for each round, whether an edge device should participate in federated training or focus on serving inference requests.

On the other hand, when training and inference are simultaneously considered, the static dataset used in the traditional FL framework becomes ineffective.
As inference request patterns evolve, models trained on static datasets will fail to capture such dynamics, necessitating continuous model retraining with up-to-date data to ensure high-quality services~\cite{WangH, AiX}.
For instance, the nodes in the ITS must update their models with newly collected data to capture evolving traffic situations.
This calls for a novel modeling framework and efficient resource management mechanisms to meet quality of service (QoS) requirements, such as inference accuracy and latency, and reduce the energy costs of edge devices at the same time.

However, addressing the above features in FEEL systems raises new challenges. Specifically,

\begin{enumerate}
    \item It is inherently challenging to determine the mode between training and inference, as both consume time and energy resources, while impacting performance metrics such as inference latency and accuracy differently. The resulting mode selection problem is complex and difficult to analyze.
    Additionally, the well-known \textit{straggler problem} in federated training further complicates the decision-making, since each device's model performance depends on both its own choices and the choices of others~\cite{ChenX}.
	
    \item To bridge the evolving request patterns and dynamic data, temporal correlations need to be considered, 
    rendering inference accuracy time-sensitive and more complex.
    Specifically, it must account for not only the volume of data but also data and model freshness~\cite{Shisher, ZhangY}.
    Furthermore, given that edge devices cannot access future information, 
    an online decision-making framework becomes necessary, 
    introducing significant uncertainty and complexity to the joint optimization problem.
	
    \item The designed strategy should account for multiple performance metrics, including inference accuracy, latency, and energy consumption. 
    These objectives are often conflicting with each other in practical systems, making it hard to optimize them simultaneously~\cite{ZhouY}. 
    This results in a multi-objective optimization (MOO) framework, posing substantial difficulties to achieve effective solutions.
\end{enumerate}

To address the aforementioned challenges, in this paper, we present a novel joint optimization framework that guides resource-constrained edge devices in the FEEL system to make mode selection and resource allocation decisions properly. 
Specifically, we consider the coexistence of federated training and inference, where edge devices dynamically balance these two processes
under limited communication and computation budgets. 
Given that edge devices need to make sequential decisions under dynamic and uncertain environments while optimizing conflicting objectives such as inference accuracy, latency, and energy consumption, we formulate the problem as a multi-objective Markov decision process (MOMDP). 
To this end, with the NP-hard nature of the formulated MOMDP, we develop a constrained multi-objective proximal policy optimization (C-MOPPO) algorithm that learns efficient policies in an online manner without requiring future information and effectively explores the trade-offs among conflicting performance metrics.
The main contributions of this paper are summarized as follows.
\begin{itemize}
	\item We investigate a joint optimization of mode selection, CPU frequency, and transmission power allocation for resource-constrained edge devices in the FEEL system. 
    A tandem-queue-inspired conversion mechanism is designed to characterize the transition from inference requests to training data, provisioning adaptive dynamic datasets for edge devices.
    \textcolor{blue}{
    To capture the temporal relevance of training data and the staleness of local models, we integrate the concepts of 
    the age of data (AoD) and age of model (AoM) to 
    develop an age-sensitive inference accuracy formulation that jointly accounts for the volume and aging of both 
    training data
    and edge models.}
	
	\item We introduce a novel algorithm, called C-MOPPO, to address the formulated multi-objective optimization problem. 
    Specifically, C-MOPPO first leverages proximal policy optimization (PPO) under different preferences to generate a set of candidate policies.
    Based on these candidates, it then alternatively performs policy selection and constrained policy optimization, extending the policies to achieve balanced trade-offs among inference accuracy, latency, and energy consumption.
	
	\item We conduct extensive experiments to verify the proposed C-MOPPO algorithm. 
    The results validate its feasibility in generating a high-quality and dense Pareto set and demonstrate its consistent superiority over benchmark approaches under various system settings.  
\end{itemize}

The rest of this paper is structured as follows.
Section~\ref{rw} reviews recent related works and outlines the novelties of this paper.
Section~\ref{syst} details the system model and formulates the multi-objective optimization problem. 
In Section~\ref{algo}, the C-MOPPO is presented and analyzed theoretically.
Section~\ref{resu} reports the experiment and simulation results, and Section~\ref{concl} concludes the paper.


\section{Related Work}\label{rw}
Recently, numerous studies have been devoted to FEEL systems, which have emerged as a key paradigm for enabling EI.
For example, 
Tang et al.~\cite{TangJ} improved training efficiency by proposing a class-balanced client selection and bandwidth allocation framework, aiming to reduce both the latency and energy consumption of federated training in mobile edge computing networks.
Fu et al. in~\cite{FuS} investigated the execution of multiple federations in FEEL, and formulated the problem as a two-stage Stackelberg game, where resource allocation was optimized through convex modeling and device selection was treated as a congestion game.
In~\cite{FengJ}, the authors formulated a joint optimization problem to simultaneously minimize the energy consumption of heterogeneous edge devices and maximize their harvested energy, which was then solved through a decomposition-based approach. 
These works primarily concentrated on the federated training process, while overlooking a critical aspect that edge devices are also responsible for handling inference requests with their trained models. 

Some recent works have been dedicated to investigating edge inference for supporting low-latency service delivery. Specifically,
Fan et al. in~\cite{FanW} leveraged a Lyapunov optimization framework to jointly optimize inference task offloading, adjustment, and resource allocation, with the objective of minimizing inference delay while satisfying accuracy requirements across edge devices. 
To improve inference efficiency in unmanned aerial vehicles assisted IoT networks, Xu et al.~\cite{XuJinf} proposed a constrained deep reinforcement learning based algorithm for task allocation and resource management.
Nevertheless, these studies assume that model performance remains constant over time, which limits their applicability to practical online EI systems, as the inference accuracy often fluctuates due to the temporal drift in request patterns.
More recently, several studies have attempted to integrate FL with inference. For instance, 
Luo et al.~\cite{LuoK} introduced a proactive optimization approach to balance the trade-off between model performance and the cost of federated training under the inference offloading framework.
Lackinger et al. in~\cite{Lackinger} proposed a load-aware inference orchestration scheme for hierarchical federated learning to reduce the latency and communication costs. 
However, these works still face limitations, such as implicitly assuming that clients can perform training and inference simultaneously, which is unrealistic for resource-constrained edge devices. 
Moreover, existing studies rely on inference offloading to enhance service performance, which may introduce potential privacy concerns and require additional encryption mechanisms.

Deep reinforcement learning (DRL) algorithms have been widely applied in federated learning to tackle online optimization problems and improve long-term performance in dynamic environments, such as client selection, resource allocation, and cost minimization~\cite{ChenX, QiangG, LiZ, OkegbileS}.
However, in these papers, the authors typically formulated the problems as single-objective optimizations by summarizing multiple metrics with predefined weights. In practice, performance objectives can be conflicting, making standard DRL algorithms less effective in capturing the comprehensive trade-offs. This limitation highlights the need for approaches that can better handle the conflicting objectives in FEEL systems.

In summary, unlike existing works, this paper simultaneously considers training and inference in the FEEL system and proposes a novel \textcolor{blue}{multi-objective deep reinforcement learning (MODRL)} algorithm that jointly optimizes mode selection, communication, and computation resource allocations for resource-constrained edge devices under dynamic accuracy modeling within the online settings.


\section{System Model and Problem Formulation}\label{syst}
In this section, we first present an overview of the considered FEEL system in Section~\ref{overview}.
Then, to be more specific, the detailed analysis for federated training and request processing, as well as the modeling of inference accuracy, are described in Section~\ref{mode} and~\ref{acc}, respectively.
Finally, 
a corresponding multi-objective optimization problem is formulated in Section~\ref{formulation}.
To facilitate understanding, Table~\ref{notation} presents all important notations used in this paper.
\begin{table}[t]
	\begin{center}
		\caption{Summary of Important Notations}
		\label{notation}
		\resizebox{\linewidth}{!}{ 
				\begin{tabular}{c|p{0.725\linewidth}}
					\hline
					\textbf{Symbol} &  \textbf{Meaning} \\\hline \hline
					$\mathcal{N}$ &  Set of edge devices \\
					$\mathcal{M}^t$ & Set of edge devices in training mode in round $t$ \\
					$\lambda_n$ & Request arrival rate of edge device $n$\\
					$\alpha^t_n$ &  Mode selection of edge device $n$ in round $t$ \\
					$f_n^t$ & CPU frequency of edge device $n$ in round $t$ \\
					$p_n^t$ & Power allocation of edge device $n$ in round $t$ \\
					$L^t_{n,\text{cmp}}$, $E^t_{n,\text{cmp}}$ & Latency, energy consumption associated with computing model update for edge device $n$ in round $t$\\
					$L^t_{n,\text{up}}$, $E^t_{n,\text{up}}$ & Latency, energy consumption associated with uploading model update from edge device $n$ in round $t$\\
					$L^t_{n,\text{tr}}$, $E^t_{n,\text{tr}}$ & Total latency, total energy consumption of edge device $n$ in training mode in round $t$\\
					$L^t_{n,\text{in}}$, $E^t_{n,\text{in}}$ & Latency, energy consumption of edge device $n$ in inference mode in round $t$\\
					$L^t_{n,\text{wa}}$ & Average waiting time for requests of edge device $n$ in round $t$\\
					$L^t$ & Total duration of round $t$\\
					$Q_n^t$ & Number of request queue of edge device $n$ in round $t$\\
					$D_n^t$ & Number of training data of edge device $n$ in round $t$\\
					$\Delta_n^t$ & Average age of data for edge device $n$ in round $t$\\
					$\delta_n^t$ & Age of model for edge device $n$ in round $t$\\
					\hline
				\end{tabular}
			} 
		\end{center}
		\vspace{-0.2in}
	\end{table}

\subsection{System Overview}\label{overview}
As illustrated in Fig.~\ref{system}, the FEEL system comprises one edge server (ES) and a set of resource-constrained edge devices, denoted by $\mathcal{N} = \{1,2,...,N\}$.
The edge devices are required to simultaneously provide real-time inference services and update models through federated training to maintain the inference accuracy. 
However, due to mode conflicts and resource limitations, we assume that the edge devices can only perform either training or inference during each round, making mode selections essential.
The detailed workflow of the system is described as follows.
\begin{figure}[t]
	\centering
	\includegraphics[width=1\linewidth]{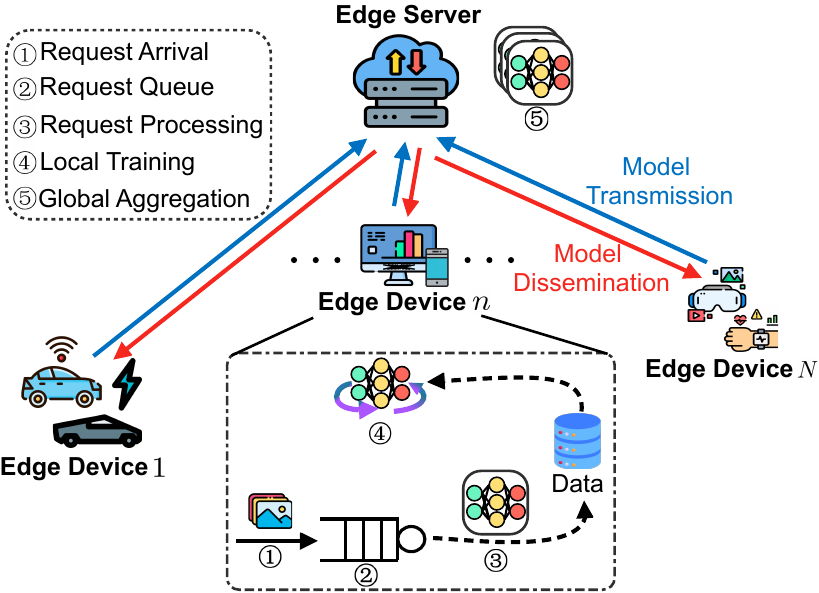}
	\caption{Illustration of the FEEL system.}
	\label{system}
	\vspace{-0.1in}
\end{figure}

\subsubsection{Model Initialization}
Each edge device is initially configured with a pre-trained learning model.
\subsubsection{Request Arrival}
During each round $t\in \mathcal{T} = \{1,2,\dots,T\}$, edge devices receive complex inference requests.
These inference tasks are first pushed into a request queue, awaiting processing.
\subsubsection{Edge Devices Perform Local Training or Request Processing}
In each round $t$, the mode of edge device $n$ is denoted by $\alpha_{n}^t$, with $\alpha_{n}^t=1$ if device $n$ participates in the $t$-th round federated training (\textbf{Training Mode}), and $\alpha_{n}^t=0$ if device $n$ processes the inference requests waiting in its queue (\textbf{Inference Mode}). 
For the edge devices in training mode, they transmit their model updates to the ES after completing local training. 
In contrast, edge devices in inference mode process requests using the stored model 
and return the results to users for evaluation. 
The completed requests, along with user feedback, are then converted into labeled training samples that are temporarily stored and utilized in the subsequent training~\cite{SharmaR}.
After training, these samples are discarded, thereby ensuring the storage constraints in practical deployments.
This design enables edge devices to dynamically accumulate training data without relying on large storage, thus improving training efficiency under resource constraints.
\subsubsection{Model Aggregation and Dissemination}
After receiving the model updates from all participating edge devices, the ES aggregates them using a weighted averaging mechanism (e.g., FedAvg in~\cite{McMahan}).
The updated global model is then distributed to the participating devices with $\alpha_{n}^t=1$ as their training rewards.

\subsection{Federated Training and Request Processing}\label{mode}
In what follows, we analyze the detailed latency and energy consumption in the training and inference modes.
\textcolor{blue}{In our system model, we consider a round-level service, assuming that the resource allocation remains constant for the entire duration of a round once determined at its beginning.}

$\bullet$ \textit{Federated Training.}
Let $\mathcal{M}^t=\{n|\alpha_{n}^t=1, n\in\mathcal{N}\}$ be the subset of edge devices that are in the training mode in round $t$.
These edge devices first receive the latest global model from the ES.
Subsequently, the edge devices load all the new training data from their local cache and perform $U$ iterations of full-batch gradient descent.
Denote the new data stored in edge device $n$'s cache in the $t$-th round by $\mathcal{D}^t_n$ with the number $D^t_n$.
The computation latency  for edge device $n$ to participate in the current training round can be calculated as
\begin{equation}
L_{n,\text{cmp}}^t =U\frac{\chi_{n,\text{tr}} D^t_n}{f^t_{n}},
\end{equation}
where 
$\chi_{n,\text{tr}}$ is the required CPU cycles for calculating the gradient on a single data sample at edge device $n$, and $f^{t}_{n} $ represents the CPU frequency allocated to the $n$-th edge device during the round $t$\textcolor{blue}{\footnote{\textcolor{blue}{While formulated using CPU terminology, our framework actually treats $\chi$ and $f$ as abstract representations of computational demand and resource capacity, respectively.
For GPU-equipped edge devices, these parameters can correspond to GPU processing cycles and core frequency.}}}.
According to~\cite{YangZ}, the energy consumption for computing the model update can be expressed as
\begin{equation}
    E_{n,\text{cmp}}^t =e_n(f^t_{n})^3L_{n,\text{cmp}}^t,
\end{equation}
where $e_n$ is the effective switched capacitance of edge device $n$, which depends on its chipset architecture.

After completing local gradient computation, the participating edge devices upload their model updates to the ES for aggregation.
Let $p_{n}^t$ be the transmission power allocated to edge device $n$ in round $t$.
The transmission rate from edge device $n$ to the ES is
\begin{equation}
    \mathsf{r}_{n}^t = \frac{B}{|\mathcal{M}^t|}\log_2(1+\frac{p_{n}^t h^t_{n}}{\sigma^2}),
\end{equation}
where $B$ is the bandwidth of the channel that is equally shared by all participating edge devices.
$h^t_n$ and $\sigma^2$ are the channel gain and channel noise, respectively.
Correspondingly, the latency and energy consumption of the edge device $n$ in uploading the model update with the size $G$
can be calculated as
\begin{align}
    L^{t}_{n,\text{up}} &= \frac{G}{\mathsf{r}^t_n}, \\
    E^{t}_{n,\text{up}} &= p_n^t L^t_{n,\text{up}}.
\end{align}

Considering the synchronous FL framework, the total latency of the $t$-th round of training, which is also defined as the total duration of the $t$-th round, depends on the slowest edge device and can be given as
\begin{equation}
     L^t_{n,\text{tr}} =  L^t = \max_{n\in \mathcal{M}^t}(L_{n,\text{cmp}}^t+L^{t}_{n,\text{up}}).
\end{equation}
\textcolor{blue}{Given that the edge server possesses significantly higher transmit power than edge devices and utilizes the wider downlink bandwidth for broadcasting, the latency and energy consumption associated with model dissemination are considered negligible compared to the uplink and computation costs\cite{TangJ,ShiW}. Therefore,}
the total energy consumption of the participating edge device $n$ in the $t$-th round can be derived as
\begin{equation}
     E^t_{n,\text{tr}} =E_{n,\text{cmp}}^t + E^t_{n,\text{up}}.
\end{equation}

$\bullet$ \textit{Request Processing.}
For the edge devices selected to be in inference mode during the $t$-th round, their learned models are frozen for request processing.
\textcolor{blue}{We adopt a batch-processing paradigm to enhance the utilization of computation resources. 
Specifically, all inference requests accumulated in the edge device’s queue are batched into a single large input tensor, which is then fed into the model for parallel inference~\cite{LiuZ}. 
This setting avoids the sequential processing mechanism, which may underutilize the computing units of edge devices.}
Let $\lambda_n$ be the request arrival rate of edge device $n$, the number of requests generated
during the round $t$ is given by $d^t_n = \lambda_n L^t$, where $L^t$ is the total duration of round $t$.
Denote the request queue in edge device $n$'s cache in round $t$ by $\mathcal{Q}^t_n$ with the size $Q^t_n$, which is updated according to
\begin{equation}
    Q^{t+1}_n =
\begin{cases}
d^t_n,\ \alpha_n^t=0,\\
Q^{t}_n + d^t_n,\ \alpha_n^t=1.
\end{cases}
\end{equation}

According to the CPU frequency allocation $f^t_n$, the processing time and energy consumption for edge device $n$ to infer $Q^t_n$ requests can be respectively expressed as
\begin{align}
    L^t_{n,\text{in}} &= \frac{\chi_{n,\text{in}} Q^t_n}{f^t_{n}}, \\
    E^t_{n,\text{in}} &= e_n (f^t_{n})^3L^t_{n,\text{in}},
\end{align}
where $\chi_{n,\text{in}}$ is the CPU cycles needed to process an inference request on edge device $n$.
Given that inference services are subject to much stricter latency requirements than training in practical systems, we assume that the inference latency must satisfy the following constraint
\begin{equation}
    L_{n,\text{in}}^t \le L^t,\ \forall n\in\mathcal{N} \setminus \mathcal{M}^t,\ \forall t\in\mathcal{T}.
\end{equation}
Based on the number of requests generated during each round, 
the average waiting time for these requests until edge device $n$ is in inference mode 
can be calculated as
\begin{equation}
L^t_{n,\text{wa}} = \frac{1}{Q^t_n} \sum_{t'=t^*}^{t-1} d^{t'}_n \cdot \left( \sum_{t''=t'}^{t-1} L^{t''} \right),
\end{equation}
where $t^* = \max\{\tau < t \mid \alpha_n^\tau = 0\}$ represents the most recent round before $t$ when device $n$ was in the inference mode.
Therefore, the total latency of requests inferred by edge device $n$ in the $t$-th round can be represented as
\begin{equation}
L^t_{n,\text{tot}} = L^t_{n,\text{wa}} + L^t_{n,\text{in}}.
\end{equation}

\subsection{Inference Accuracy}\label{acc}
Building on the conversion mechanism that bridges the request queue and training dataset, 
the amount of the new training data for the edge device $n$ is updated as
\begin{equation}
D^{t+1}_{n} =
\begin{cases}
D_n^{t}+\mathscr{F}_n(Q_n^{t}),\ \alpha^t_n=0,\\
0,\ \alpha^t_n=1,
\end{cases}
\end{equation}
where $\mathscr{F}_n(\cdot)$ is a task-dependent mapping function that converts the number of completed inference requests into the number of new training data\footnote{Note that this mapping varies across applications. For instance, in image classification, each request may yield one training sample (one-to-one mapping); in video-based detection, several frames may be merged into one sample (many-to-one mapping); and in health monitoring, one request may generate multiple samples corresponding to different physiological indicators (one-to-many mapping).}.

\textcolor{blue}{Due to the dynamic nature of online inference, either calculating or transmitting the true model accuracy after each round is infeasible. The former is complex and impractical~\cite{ZhouY}, while the latter incurs communication overheads and poses privacy leakage risks to the FEEL system~\cite{ChangH}. Therefore, rather than relying on the true inference accuracy, we adopt a novel approach that combines the volume and freshness of training data provided by edge devices to estimate the inference accuracy.}
Specifically, the AoD, serving as a metric for quantifying data freshness, is used to capture the temporal relevance of training samples, reflecting how well these data represent recent request patterns. 
Denoted the average AoD at edge device $n$ in round $t$ by $\Delta_n^{t}$, which is updated as\textcolor{blue}{~\cite{ZhangX, HanP, LiT}}
\begin{equation}
    \Delta_n^{t+1}=
    \begin{cases}
        \frac{(\Delta_n^t+L^t)D_n^t + \mathscr{F}_n(Q_n^t)}{D^{t+1}_n} , \ \alpha_n^t=0,
        \\
        1, \ \alpha_n^t=1.
    \end{cases}
\end{equation}
\textcolor{blue}{Note that resetting $\Delta_n^{t+1}$ to $1$ upon successful training ($\alpha_n^t=1$) reflects that the local buffer of the edge device now consists entirely of new data samples, ensuring physical and mathematical consistency across two modes.}
Meanwhile, the volume of training data is employed to reflect the coverage and diversity of samples, contributing to model generalization and stability.
By incorporating both AoD and data volume,  the estimated global model accuracy on the ES side after the $t$-th round is given by
\begin{equation}
    Acc^t_{\text{ser}} = \mathscr{G}\big(\sum_{n\in \mathcal{M}^t}D^t_n,\frac{\sum_{n\in \mathcal{M}^t}\Delta_n^t}{|\mathcal{M}^t|}\big),
\end{equation}
where $\mathscr{G}(\cdot, \cdot)$ denotes an accuracy mapping function.
\textcolor{blue}{Here, the construction of $\mathscr{G}(\cdot, \cdot)$ is motivated by two widely observed empirical phenomena in modern machine learning: scaling law and concept drift.
According to~\cite{Hestness, ZhangY2}, a model's generalization capability scales predictably (often following a logarithmic or power-law curve) with the volume of training data, exhibiting diminishing marginal returns. 
On the other hand, relying on outdated data (i.e., high AoD) severely degrades the model's predictive performance, a staleness effect typically formulated as an exponential decay~\cite{Arafa, AiX2}.
Therefore, we formulate the default accuracy mapping function in our system as a logarithmic-exponential model: $\mathscr{G}(x, y) = \kappa_1 \log(1 + \kappa_2 x) \cdot e^{-\kappa_3 y}$, where $\kappa_1$, $\kappa_2$, and $\kappa_3$ are constants to bound the modeled accuracy within the valid range. 
}

Similar to the modeling approach in~\cite{WuB}, we introduce AoM to quantify the staleness of the local model at edge device $n$ after federated training, calculated by
$\delta_n^{t+1} = (\delta_{n}^{t} + L^t)(1-\alpha_n^t)$.
\textcolor{blue}{Based on the AoM, we assume that the modeled inference accuracy of the local model at edge device $n$ in the $t$-th round is given by}
\begin{equation}
    Acc_n^t = e^{-\varrho \delta_n^t}Acc_{\text{ser}}^{\tilde{t}},
\end{equation}
where $\tilde{t} = \max \{ \tau \leq t \mid \alpha_n^\tau = 1 \}$ is the most recent round $\tau$ in which device $n$ synchronized its local model with the ES (i.e., participated in FL), and $Acc_{\text{ser}}^{\tilde{t}}$ is the server-side global model accuracy after that round.
Note that the decay term $e^{-\varrho\delta_n^t}$ reflects the impact of model staleness on inference accuracy, where $\varrho > 0$ controls the sensitivity.

\subsection{Problem Formulation}\label{formulation}
To evaluate the performance of edge devices in the presented system, we consider three key metrics: inference accuracy, latency, and energy consumption.
Specifically, inference accuracy and latency are two primary indicators of QoS experienced by end users, determining the real-time service performance of edge devices.
Based on the system workflow described above, the long-term average inference accuracy and latency
can be respectively derived as
\begin{align}
	o_1(\boldsymbol{\alpha},\boldsymbol{p}, \boldsymbol{f}) &= \frac{1}{TN}\sum_{t=1}^{T} \sum_{n=1}^{N}(1-\alpha^t_n) Acc^t_n, \label{o1} \\
	o_2(\boldsymbol{\alpha},\boldsymbol{p}, \boldsymbol{f}) &= \frac{1}{TN}\sum_{t=1}^{T} \sum_{n=1}^{N}(1-\alpha^t_n)L^t_{n,\text{tot}}, \label{o2}
\end{align}
where $\boldsymbol{\alpha} = \{ \alpha_n^t,\ \forall n \in \mathcal{N},\ \forall t\in \mathcal{T}\}$,
$\boldsymbol{p} = \{ p_n^t, \ \forall n \in \mathcal{N},\ \forall t\in \mathcal{T}\}$, and
$\boldsymbol{f} = \{ f_n^t, \ \forall n \in \mathcal{N},\ \forall t\in \mathcal{T}\}$
represent the mode selection and resource allocation variables for all edge devices, respectively.
In contrast, energy consumption is a key metric from the perspective of edge devices themselves, particularly under constrained power budgets in practical deployments.
It accounts for the energy consumed during both training and inference modes, as well as the additional overhead $E_{n,\text{sw}}$ incurred when switching between these two modes. 
Accordingly, the \textcolor{blue}{average} energy consumption can be formulated as
\begin{align}\label{o3}
o_3(\boldsymbol{\alpha},\boldsymbol{p}, \boldsymbol{f})
&= \frac{1}{TN} \sum_{t=1}^{T} \sum_{n=1}^{N} \big[ \alpha_n^t E^t_{n,\text{tr}}
+ (1-\alpha^t_n) E^t_{n,\text{in}} \nonumber \\
&\quad + |\alpha^t_n - \alpha^{t-1}_n| E_{n,\text{sw}} \big].
\end{align}

Considering the interrelation among these three metrics, we treat each of them as an optimization objective and formulate the MOO problem accordingly as
\begin{subequations}\label{opt}
\begin{align}
    \mathcal{P}_1: \  &\max_{\{\boldsymbol{\alpha},\boldsymbol{p}, \boldsymbol{f}\}} \ (o_1,\ -o_2,\ -o_3), \label{obj} \\
    \text{s.t.} \quad
    & L^t \le L_{\max},\  \forall t\in\mathcal{T},\label{c1}\\[-2pt]
    & Q_n^t \le Q_{\max},\ \forall n\in\mathcal{N},\ \forall t\in\mathcal{T}, \label{c2}\\[-2pt]
    & D_n^t \le D_{\max},\ \forall n\in\mathcal{N},\ \forall t\in\mathcal{T}, \label{c3}\\[-2pt]
    & D_n^t > 0 ,\ \forall n\in \mathcal{M}^t,\ \forall t\in\mathcal{T},\label{c4}\\[-2pt]
    & L_{n,\text{in}}^t \le L^t,\ \forall n\in\mathcal{N} \setminus \mathcal{M}^t,\ \forall t\in\mathcal{T},\label{c5}\\[-2pt]
    & \alpha^t_n\in\{0,1\},\ \forall n\in\mathcal{N},\ \forall t\in\mathcal{T},\label{c6}\\[-2pt]
    & p_{\min}\le p^t_n\le p_{\max}, \ \forall n\in\mathcal{N},\ \forall t\in\mathcal{T},\label{c7}\\[-2pt]
    & f_{\min}\le f^t_n\le f_{\max}, \ \forall n\in\mathcal{N},\ \forall t\in\mathcal{T}, \label{c8}
\end{align}
\end{subequations}
where constraint~\eqref{c1} guarantees that the duration of each round does not exceed a predefined threshold, maintaining the stability of the FEEL system.
Constraints~\eqref{c2} and~\eqref{c3} limit the sizes of the request queue and dataset, respectively, preventing buffer overflows and maintaining storage limits of edge devices.
Constraint~\eqref{c4} restricts that any edge device participating in federated training holds a non-empty dataset.
Constraint~\eqref{c5} prevents the unnecessary waiting time through limiting the inference latency. 
Constraint~\eqref{c6} enforces the discrete nature of the mode selection variable.
Constraints~\eqref{c7} and~\eqref{c8} define the bounds for the allocated transmission power and CPU frequency of each edge device, respectively.

Obviously, solving this formulated MOO problem directly is very challenging because of several complexities.
\textit{First}, the discrete decision variables (i.e., $\boldsymbol{\alpha}$) make the problem a mixed-integer nonlinear programming formulation, which is known to be NP-hard.
\textit{Second}, the optimization objectives inherently exhibit conflicting relationships. For example, participating more frequently in federated training helps reduce the AoM, leading to higher inference accuracy; however, it may also increase energy consumption and prolong inference latency.
\textit{Third}, the considered system operates under online and dynamic conditions due to continuously arriving inference requests and fluctuating request patterns, while the lack of future information further complicates the problem in searching for long-term optimal strategies.

Given the above challenges,
in the next section, we propose a novel MODRL framework to efficiently address the formulated MOO problem.

\section{MODRL Algorithm}\label{algo}
In this section, we first model the formulated MOO problem $\mathcal{P}_1$ as a MOMDP. Then, adopting a novel constrained optimization perspective for MODRL, we design a constrained multi-objective proximal policy optimization (C-MOPPO) algorithm, which efficiently explores the Pareto front in the objective space. Finally, we provide the theoretical analysis for the C-MOPPO algorithm.

\subsection{MOMDP Formulation}
Due to the sequential dependence of the FEEL system state evolution, the formulated problem $\mathcal{P}_1$ can be naturally modeled as a MOMDP.
The MOMDP is represented by a tuple
$\langle \mathcal{S}, \mathcal{A}, \mathcal{P}, \boldsymbol{\mathcal{R}}, \gamma \rangle$, where
in each timestep $t$, an agent under current state $\boldsymbol{s}^t\in \mathcal{S}$ takes an action $\boldsymbol{a}^t\in \mathcal{A}$, and will transitions into a new state $\boldsymbol{s}^{t+1}$ with probability $Prob(\boldsymbol{s}^{t+1}|\boldsymbol{s}^t,\boldsymbol{a}^t) \in \mathcal{P}$.
Unlike the scalar reward in MDPs, MOMDP considers multiple objectives, so that the reward $\boldsymbol{r}^t = (r^{t}_{1}, r^{t}_{2}, \dots, r^{t}_{j}) \in   \boldsymbol{\mathcal{R}}$ is a $j$-dimensional vector representing rewards for each objective.
The discount factor $\gamma \in (0, 1]$ is used to balance immediate and future rewards.
For our considered system, the detailed definitions of the state space, action space, and reward function are provided as follows.

$\bullet$ \textit{State Space.}
\textcolor{blue}{The state represents the system information observed by the agent at each timestep, which is defined as
\begin{equation}
	\boldsymbol{s}^t = [\boldsymbol{a}^{t-1}, \boldsymbol{Q}^{t}, \boldsymbol{D}^{t}, L^{t-1}].
\end{equation}
}
Here, 
$\boldsymbol{a}^{t-1} = [\boldsymbol{a}^{t-1}_{n}]_{n\in \mathcal{N}}$
denotes the actions taken by all edge devices in the previous round, which will be introduced in Eq.~\eqref{action} later.
$\boldsymbol{Q}^{t} = [Q^t_1, Q^t_2, \dots, Q^t_N]$ and $\boldsymbol{D}^{t} = [D^t_1, D^t_2, \dots, D^t_N]$ represent the current request queue length and the size of training data for all edge devices.
\textcolor{blue}{$L^{t-1}$ is the duration of the previous round.}

$\bullet$ \textit{Action Space.}
For each edge device $n$, the agent determines its control decisions through an action vector that includes mode selection and resource allocation. Specifically, the action in round $t$ consists of the following three components:
$\boldsymbol{\alpha}^t = [\alpha^t_1, \alpha^t_2, \dots, \alpha^t_N]$ for mode selection (i.e., training or inference),
$\boldsymbol{p}^t = [p^t_1, p^t_2, \dots, p^t_N]$ for transmission power allocation, and
$\boldsymbol{f}^t = [f^t_1, f^t_2, \dots, f^t_N]$ for CPU frequency allocation.
Note that each $\alpha_n^t \in \{0,1\}$ is a discrete binary variable. The variables $p_n^t \in [0,1]$ and $f_n^t \in [0,1]$ are continuous values representing the normalized transmission power and CPU frequency, respectively.
Accordingly, the overall action in round $t$ is denoted by
\begin{equation}\label{action}
	\boldsymbol{a}^t = [\boldsymbol{\alpha}^t, \boldsymbol{p}^t, \boldsymbol{f}^t].
\end{equation}

$\bullet$ \textit{Reward Function.}
The instantaneous reward vector $\boldsymbol{r}^t$ in our MOMDP under a given state-action pair is defined as
\begin{equation}
	\boldsymbol{r}^t =
	\begin{cases}
		[r^t_1, r^t_2, r^t_3],\ \text{if all constraints are satisfied}, \\
		[\rho_1, \rho_2, \rho_3],\ \text{otherwise},
	\end{cases}
\end{equation}
where $\rho_1, \ \rho_2,\ \rho_3$ are penalty constants when the selected actions violate the constraints.
Here, the individual components of the reward vector are defined as
\begin{align}
	r^t_1 &= \frac{1}{N} \sum_{n=1}^{N} (1 - \alpha_n^t) Acc_n^t, \\
	r^t_2 &= -\frac{1}{N} \sum_{n=1}^{N} (1 - \alpha_n^t) L_{n,\text{tot}}^t, \\
	r^t_3 &= -\frac{1}{N} \sum_{n=1}^{N} \big[ \alpha_n^t E^t_{n,\text{tr}}
	+ (1-\alpha^t_n) E^t_{n,\text{in}} \nonumber \\
	&\quad + |\alpha^t_n - \alpha^{t-1}_n| E_{n,\text{sw}} \big],
\end{align}
which directly correspond to the optimization objectives defined in Eqs.~\eqref{o1}--\eqref{o3}.

After formulating the MOMDP, we introduce the following definitions to facilitate the comparison of policies in the MOO problem.
\begin{definition}
	(\textit{Pareto optimality})
	Given two policies $\pi$ and $\pi'$, policy $\pi$ \textit{\textbf{dominates}} $\pi'$ if and only if $\pi$ is no worse than $\pi'$ in all objectives and strictly better in at least one.
	A policy $\pi$ is \textit{\textbf{Pareto-optimal}} if it is not dominated by any other feasible policy.
	The set of all Pareto-optimal policies is called the \textbf{\textit{Pareto set}}, and the corresponding set of objective (cumulative reward) vectors forms the \textbf{\textit{Pareto front}}.
\end{definition}


\subsection{C-MOPPO Algorithm}
As illustrated in Fig.~\ref{alg}, the proposed C-MOPPO framework comprises three main stages.
In the first stage, called Pareto initialization, a set of scalarized reward tasks is defined and solved using the PPO algorithm~\cite{SchulmanPPO}. This process generates several initial policies, forming the initial solution set.
Following this, the algorithm alternates between the remaining two stages: policy selection and Pareto extension.
In the policy selection stage, a subset of promising policies is chosen.
In the Pareto extension stage, these selected policies are further optimized through constrained policy optimization, driving them toward unexplored regions of the Pareto front.
The detailed procedure is described below.
\subsubsection{Pareto Initialization}
The Pareto initialization stage aims to generate a set of diverse initial solutions by solving multiple learning tasks.
To achieve this, we first construct a set of preference vectors $\Omega$, in which each vector $\boldsymbol{\omega}_i \in \mathbb{R}^{(3)}$ represents a specific trade-off among the three objectives and satisfies $\sum_{j=1}^3 \omega_{i,j} = 1$, $\omega_{i,j} >0$.
These vectors serve as weights to transform the original multiple-dimensional reward into the scalar return via the preference function $f_{\boldsymbol{\omega}_i}(\boldsymbol{r}) = \boldsymbol{\omega}^{\top}_i\boldsymbol{r}$.
Each preference vector defines a learning task, where the goal is to learn a policy that maximizes the corresponding weighted sum of rewards.
Due to its stability and efficiency in policy optimization, we apply the PPO algorithm, a policy gradient method, 
to optimize these learning tasks.
\begin{figure}[!t]
	\centering
	\includegraphics[width=1\linewidth]{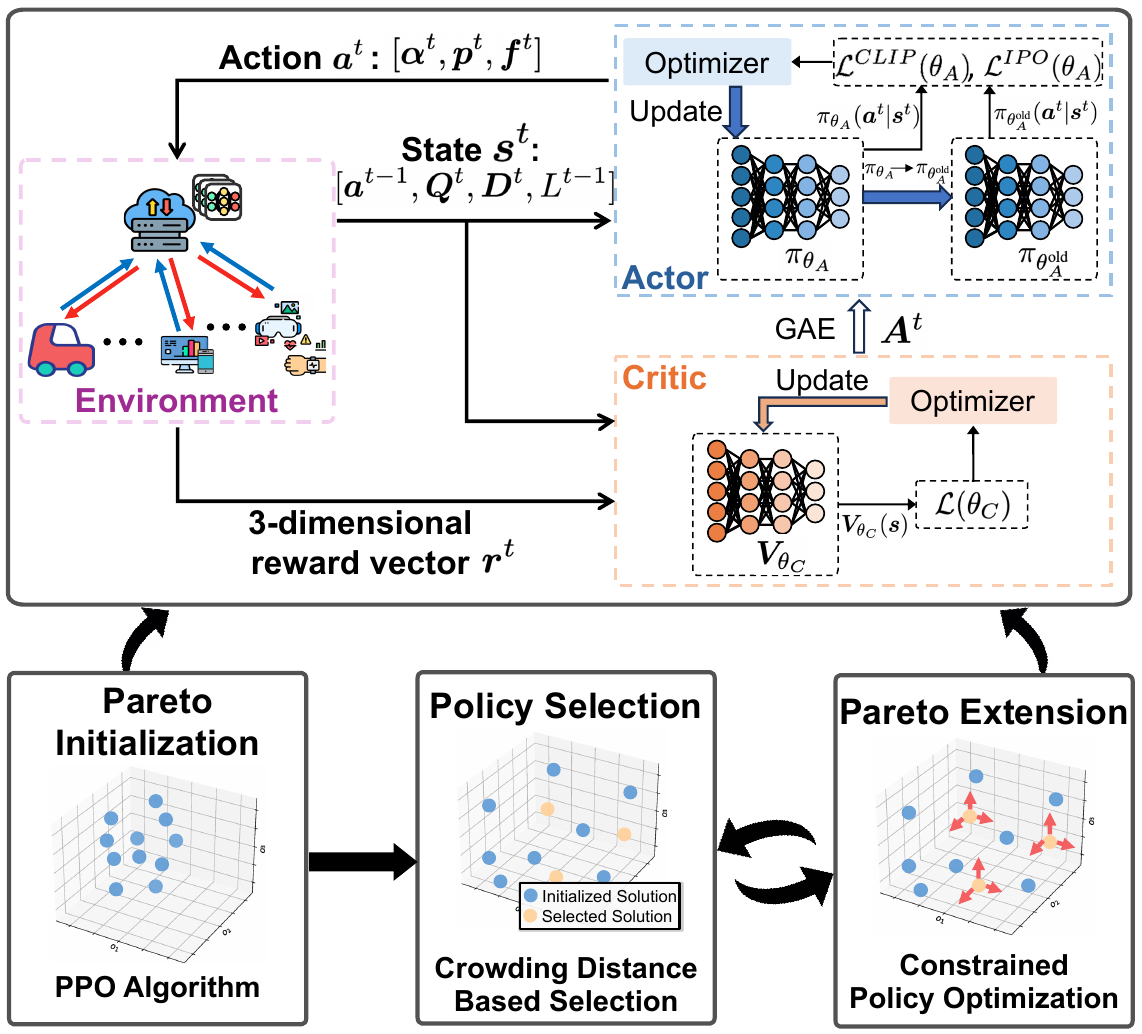}
	\caption{Overview of C-MOPPO.}
	\label{alg}
\end{figure}

Our designed PPO algorithm consists of two types of neural networks: the actor network, parameterized by $\theta_A$, defines the policy $\pi_{\theta_A}(\boldsymbol{a}|\boldsymbol{s})$;
and the critic network, parameterized by $\theta_C$, estimates the value function to guide the policy updates.
To ensure stable learning, 
a surrogate objective function is introduced that constrains policy updates by clipping the probability ratio between the new and old policies.
The clipped objective function is defined as
\begin{equation}\label{aupdate}
	\begin{aligned}
		\mathcal{L}^{CLIP}(\theta_A) =  \mathbb{E}_t\big[&\min  \big(r^t_{\text{ppo}}(\theta_A) A^t_{\boldsymbol{\omega}_i}, \\
		& \text{clip} (r^t_{\text{ppo}}(\theta_A) , 1 - \epsilon, 1 + \epsilon) A^t_{\boldsymbol{\omega}_i}\big)\big],
	\end{aligned}
\end{equation}
where $\epsilon$ is a predefined clipping parameter,
and $r^t_{\text{ppo}}(\theta_A) = \pi_{\theta_A}(\boldsymbol{a}^t|\boldsymbol{s}^t) / \pi_{\theta_{A}^{\text{old}}}(\boldsymbol{a}^t|\boldsymbol{s}^t)$ is the probability ratio between the new and old policies.
Note that the advantage value term $A^t_{\boldsymbol{\omega}_i}$ represents a scalarized estimation obtained by weighting the vectorized advantage function $\boldsymbol{A}^t$ with the preference vector $\boldsymbol{\omega}_i$, i.e., $A^t_{\boldsymbol{\omega}_i} = \boldsymbol{\omega}_i^{\top} \boldsymbol{A}^t$.
The vectorized advantage $\boldsymbol{A}^t$ is computed across all objectives using generalized advantage estimation (GAE)~\cite{SchulmanGAE}, a widely used technique that stabilizes training by balancing bias and variance through weighted summation of temporal difference (TD) errors. To this end, the actor network is optimized via gradient ascent to maximize the clipped objective function $\mathcal{L}^{CLIP}(\theta_A)$.

For the critic network, we extend the standard scalar state-value function to a vectorized state-value function, denoted by $\boldsymbol{V}_{\theta_C}(\boldsymbol{s})$.
This state-value function maps each state $\boldsymbol{s}$ to a vector of expected cumulative rewards for all objectives under the current policy $\pi$.
Based on this, the network parameters $\theta_C$ are updated by minimizing the following mean-squared error loss
\begin{equation}\label{cupdate}
	\mathcal{L}(\theta_C) = \mathbb{E}_t\big[||\boldsymbol{V}_{\theta_C}(\boldsymbol{s}^t) - \hat{\boldsymbol{V}}(\boldsymbol{s}^t)||^2  \big],
\end{equation}
where $\hat{\boldsymbol{V}}(\boldsymbol{s}^t) = \boldsymbol{r}^t + \gamma\boldsymbol{V}_{\theta_C}(\boldsymbol{s}^{t+1}) $ is the target value function.
This loss is minimized via gradient descent, using backpropagation to update $\theta_C$.

The primary steps of the PPO algorithm used in this stage are summarized in Algorithm~\ref{AlgPPO}.
Note that the resulting set of policies forms the initial solution set $\Psi_{\text{init}}$, which serves as the starting point for the subsequent stages.
\begin{algorithm}[!t]
	\caption{Proximal Policy Optimization Algorithm for Pareto Initialization}
	\label{AlgPPO}
	\KwIn{Preference vector set $\Omega$, actor learning rate $\xi_A$, critic learning rate $\xi_C$, clipping parameter $\epsilon$}
	\KwOut{Initial solution set $\Psi_{\text{init}}$}
	\For{each preference vector $\boldsymbol{\omega}_i \in \Omega$}{
		Initialize actor network with parameters $\theta_A$, critic network with parameters $\theta_C$\;
		\For{each episode}{
			Initialize state $\boldsymbol{s}^0$\;
			\For{$t = 1, \dots, T$}{
				Observe state $\boldsymbol{s}^t$, select and execute action $\boldsymbol{a}^t$ via policy $\pi_{\theta_A}$, then receive reward $\boldsymbol{r}^t$ and observe next state $\boldsymbol{s}^{t+1}$\;
				Store transition $(\boldsymbol{s}^t, \boldsymbol{a}^t, \boldsymbol{r}^t, \boldsymbol{s}^{t+1})$ into a trajectory list\;
			}
			\For{each epoch}{
				Sample a mini-batch of transitions from the trajectory list\;
				Update actor parameters $\theta_A$ by maximizing the clipped objective function Eq.~\eqref{aupdate} using gradient ascent with learning rate $\xi_A$\;
				Update critic parameters $\theta_C$ by minimizing the mean-squared error Eq.~\eqref{cupdate} using gradient descent with learning rate $\xi_C$\;
			}
		}
		Store the learned policy $\pi_{\theta_A}$ into $\Psi_{\text{init}}$\;
	}
\end{algorithm}

\subsubsection{Policy Selection}
Directly extending all policies in $\Psi_{\text{init}}$ is inefficient, as the initial policies are often unevenly distributed along the Pareto front, with some even lying off the true Pareto front, limiting the effectiveness of discovering new Pareto-optimal solutions.
Therefore, to effectively initiate the subsequent process, it is essential to first select a suitable subset of policies from the initial solution set $\Psi_{\text{init}}$.
We incorporate a policy selection mechanism prior to the extension process, where crowding distance~\cite{NSGA-II} is employed to identify policies with higher potential for Pareto extension.

Given a Pareto front approximation set $\Psi$ in the objective space, we first sort the policies in $\Psi$ according to the normalized value of each objective $i$, denoted by the sorted list $\tilde{Z}_i$.
For a specific policy $\psi \in \Psi $, supposing its index in the $\tilde{Z}_i$ is $l$, then the crowding distance of $\psi$, serving as an estimation of the density of policies around it, can be calculated as
\begin{equation}\label{CD}
	CD(\psi) = \sum_{i=1}^{3} \frac{\tilde{Z}_i(l+1) - \tilde{Z}_i(l-1)}{\tilde{Z}_{i,\max} - \tilde{Z}_{i,\min}},
\end{equation}
where $\tilde{Z}_{i,\max}$ and $\tilde{Z}_{i,\min}$ are the maximum and minimum values of the $i$-th objective in $\Psi$.
After computing the crowding distances of all current policies, we rank them in descending order based on their crowding distance values.
Intuitively, policies with larger crowding distances tend to be located in less-explored regions of the Pareto front, making them suitable for expanding the front toward new and diverse regions.
Therefore, the top $N_{\text{ext}}$ policies with the largest crowding distances are selected to form the candidate set $\Psi_{\text{ext}}$ for the subsequent Pareto extension process.
The detailed procedure of the policy selection is summarized in Algorithm~\ref{AlgSelection}.

\begin{algorithm}[!tb]
		\caption{Crowding Distance Based Policy Selection}\label{AlgSelection}
		\KwIn{Number of extension policies $N_{\text{ext}}$, current solution set $\Psi$}
		\KwOut{Extension solution set $\Psi_{\text{ext}}$}
		\For{each policy in $\Psi$}{
			Calculate the crowding distance by Eq.~\eqref{CD}\;
		}
		Sort the policies in $\Psi$ in descending order according to their crowding distances;\
		
		Initialize number of selected policies $N_{\text{sel}} = 0$\;
		\While{$N_{\text{sel}} < \min(N_{\text{ext}}, |\Psi|)$}{
			Add the $N_{\text{sel}}$-th policy in sorted $\Psi$ to $\Psi_{\text{ext}}$\;
			$N_{\text{sel}} = N_{\text{sel}} + 1$\;
		}
\end{algorithm}

\subsubsection{Pareto Extension}
Starting from the policies selected in the previous stage, the Pareto extension stage aims to explore new regions of the Pareto front by refining and optimizing these policies.
To achieve this goal, the constrained policy optimization approach~\cite{LiuR} can be utilized to guide the learning process in this stage.
Based on the principle of optimizing one primary objective while constraining the remaining objectives to acceptable levels, this constrained policy optimization approach is ideally suited for the Pareto extension stage.
Specifically, when focusing on maximizing a selected $l$-th reward in the original MOMDP, the framework requires the expected returns of the remaining objectives to exceed predefined thresholds.
The resulting constrained reinforcement learning formulation is given as 
\begin{equation}\label{CMDP}
	\mathcal{P}_2: \
	\max_{\pi} \ \mathcal{J}_{l}(\pi), \ \ \text{s.t.} \ \mathcal{J}_i(\pi)  \ge c_i, \  \forall i \in \{1,2,3\} \setminus \{l\}
	,
\end{equation}
where $\mathcal{J}_i(\pi) = \mathbb{E}_{\pi}\big[ \sum_{t}\gamma^tr^t_i \big]$ denotes the expected cumulative reward of the $i$-th objective under policy $\pi$, and $c_i$ represents the specified threshold for the $i$-th objective.

Note that the goal of the Pareto extension stage is to expand the current Pareto front approximation $\Psi_{\text{ext}}$ toward different objective directions by solving the constrained optimization problem in $\mathcal{P}_2$.
To effectively drive such expansion, it is crucial to ensure that the solutions obtained from $\mathcal{P}_2$ are themselves Pareto-optimal.
This highlights the importance of properly setting the constraint thresholds $c_i$ in $\mathcal{P}_2$.
To guide the selection of the constraint values, the following proposition establishes a sufficient condition.

\begin{proposition}\label{Prop1}
	Let $\tilde{Z}_i$ be the ascending sorted list of the $i$-th objective values from the Pareto front,
	and suppose that the index of the currently selected solution in this sorted list is $l$.
	If the constraint thresholds satisfy
	\begin{equation}
		c_i \ge \tilde{Z}_i(l-1), \ \forall i \in \{1,2,3\} \setminus \{l\},
	\end{equation}
	then the optimal solution of problem $\mathcal{P}_2$ is guaranteed to be a Pareto-optimal solution.
\end{proposition}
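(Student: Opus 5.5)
The plan is a contradiction argument built on the monotonicity of the constraints in $\mathcal{P}_2$. Let $\pi^\ast$ be an optimal solution of $\mathcal{P}_2$ for the chosen primary objective $l$ and thresholds $c_i$ satisfying the stated condition. Suppose $\pi^\ast$ is not Pareto-optimal. Then, by the definition of Pareto optimality, some feasible policy $\pi'$ dominates it, meaning $\mathcal{J}_i(\pi') \ge \mathcal{J}_i(\pi^\ast)$ for all $i\in\{1,2,3\}$, with strict inequality for at least one index.

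\textbf{Step 1: feasibility is nonempty.} Before the main argument I would check that $\mathcal{P}_2$ has a feasible point, so that $\pi^\ast$ exists. The currently selected solution sits at index $l$ of the ascending list $\tilde{Z}_i$, so its value $\tilde{Z}_i(l)$ is at least $\tilde{Z}_i(l-1)$. If the thresholds are taken in the window $\tilde{Z}_i(l-1)\le c_i\le \tilde{Z}_i(l)$, the selected policy is itself feasible. The lower bound $c_i\ge\tilde{Z}_i(l-1)$ also excludes policies worse than the neighbouring front point, so the constraints are non-vacuous and the search is confined to the region adjacent to the selected solution.

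\textbf{Step 2: the dominating policy is feasible.} For every constrained index $i\neq l$ we have $\mathcal{J}_i(\pi')\ge \mathcal{J}_i(\pi^\ast)\ge c_i$. Hence $\pi'$ is feasible for $\mathcal{P}_2$. Two cases then arise.
\begin{enumerate}
\item If $\mathcal{J}_l(\pi')>\mathcal{J}_l(\pi^\ast)$, this contradicts the optimality of $\pi^\ast$ directly.
\item If $\mathcal{J}_l(\pi')=\mathcal{J}_l(\pi^\ast)$, the strict improvement must lie in some constrained objective $i\neq l$. Then $\pi'$ is also optimal for $\mathcal{P}_2$.
\end{enumerate}

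\textbf{Step 3: the tie case (main obstacle).} The tie case is the real difficulty, since an optimal solution of an $\varepsilon$-constraint problem is in general only weakly Pareto-optimal. I would handle it by interpreting ``the optimal solution'' as either the unique maximizer, or the maximizer selected by a lexicographic tie-break on the constrained objectives.

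The tie-break argument would run as follows.
\begin{itemize}
\item Among all optimal policies of $\mathcal{P}_2$, pick one that is maximal with respect to the remaining objectives.
\item Such a policy exists because the achievable return vectors form a bounded set. Rewards are bounded, including the penalty values $\rho_i$, and the discount factor gives bounded returns. For existence of a maximizer I would take this set to be closed, which holds, for example, over stationary policies of a finite-horizon MOMDP.
\item If this maximal policy were dominated by some $\pi''$, then $\pi''$ would be feasible and optimal by Step 2, with a strictly larger value in some objective $i\neq l$. This contradicts the maximality in the selection.
\end{itemize}
This closes the contradiction and shows that the optimal solution of $\mathcal{P}_2$ is Pareto-optimal.

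I would state the tie-handling assumption explicitly as part of the proof, since without it the claim only yields weak Pareto optimality.
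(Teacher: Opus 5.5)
The paper gives no argument of its own (it defers to Appendix~B of \cite{LiuR}), and your proof is correct: it is the standard $\varepsilon$-constraint contradiction argument, in which a dominating policy stays feasible for $\mathcal{P}_2$ and so cannot strictly improve $\mathcal{J}_l$. Your tie-case caveat is right and genuinely needed, since an arbitrary maximizer of $\mathcal{P}_2$ is only guaranteed to be weakly Pareto-optimal; note also that the hypothesis $c_i\ge\tilde{Z}_i(l-1)$ is never used in your Steps~2--3 (it neither rules out ties nor by itself ensures feasibility, which is why your Step~1 had to add $c_i\le\tilde{Z}_i(l)$).
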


\begin{proof}
	Please see in Appendix B,~\cite{LiuR}.
\end{proof}

Proposition~\ref{Prop1} states that as long as the constraint thresholds for the non-optimized objectives are set no lower than the performance levels of nearby solutions on the current Pareto front approximation, the resulting solution from $\mathcal{P}_2$ will remain on the Pareto front.

However, applying Proposition~\ref{Prop1} in real-world settings may be challenging. 
It necessitates repeated evaluations and sorting of all candidate solutions at each optimization step, which can be computation-intensive and time-consuming.
Moreover, since Proposition~\ref{Prop1} provides only a sufficient condition, it imposes overly strict constraints, potentially excluding feasible policies that could contribute to a more complete Pareto front.
Inspired by the underlying intuition that trade-offs among objectives should be maintained to prevent excessive degradation of non-optimized objectives,
we introduce a more practical constraint specification method.
In this method, the strict requirement is relaxed by step-wise constraints that only ensure the expected returns do not fall significantly below those of the previous policy.
Specifically, the expected rewards of non-optimized objectives are required to remain above a fraction $\beta \in (0,1)$ of their previous values.
Accordingly, we reformulate the optimization $\mathcal{P}_2$ as follows.
\begin{equation}\label{CRL}
	\mathcal{P}_3:\
	\max_{\pi} \ \mathcal{J}_l(\pi), \ \
	\text{s.t.} \ \mathcal{J}_i(\pi) \geq \beta \mathcal{J}_i(\pi'), \ \forall i \in \{1,2,3\} \setminus \{l\},
\end{equation}
where $\mathcal{J}_i(\pi')$ is the expected reward of the $i$-th objective in the last optimization step.

To effectively solve problem $\mathcal{P}_3$, we apply the interior-point policy optimization (IPO) approach~\cite{LiuY}, which incorporates logarithmic barrier penalties into the objective function to enforce the constraints.
Define the logarithmic barrier function for each constraint in $\mathcal{P}_3$ as
\begin{equation}
	\phi(\mathcal{J}_i(\pi)) = \frac{\log(\mathcal{J}_i(\pi) - \beta \mathcal{J}_i(\pi'))}{\mu}.
\end{equation}
\textcolor{blue}{Here, the expected returns, including the negative latency and energy consumption as well as the positive accuracy metric, are already normalized during the policy selection stage. Therefore, the expected rewards $\mathcal{J}_i$ used for the barrier argument are strictly positive. This ensures the IPO framework remains mathematically rigorous and well-defined.}
\textcolor{blue}{The logarithmic barrier function acts as a penalty that increases as the policy approaches the feasibility boundary. Specifically, the barrier term tends to $-\infty$ as $\mathcal{J}_i(\pi) - \beta \mathcal{J}_i(\pi') \to 0^{+}$, preventing the constraint from being violated.}
\textcolor{blue}{Note that $\mu > 0$ is a hyperparameter that controls the strength of the logarithmic barrier, with larger values reducing its influence and yielding solutions of this optimization closer to the original problem.}
With this logarithmic barrier function,
the problem $\mathcal{P}_3$ can be converted into an unconstrained optimization problem as 
\begin{equation}
	\mathcal{P}_4: \
	\max_{\pi} \ \mathcal{J}_l(\pi) + \sum_{i=1, i\neq l}^{3}\phi(\mathcal{J}_i(\pi)).
\end{equation}
We integrate this reformulated problem $\mathcal{P}_4$ into the PPO framework for policy optimization.
For the $l$-th objective, we have the surrogate objective function as
\begin{equation}\label{IPO}
	\mathcal{L}^{IPO}(\theta_{A}) = \mathcal{L}^{CLIP}(\theta_A) + \sum_{i=1, i\neq l}^{3}\phi(\mathcal{J}_i(\pi_{\theta_A})).
\end{equation}
We then apply this augmented objective function to update policy parameters during the Pareto extension stage.
Using the IPO approach, the performance bound between $\mathcal{P}_3$ and $\mathcal{P}_4$ can be guaranteed, as established in Theorem~\ref{theo1}.
\begin{theorem}\label{theo1}
	The maximum gap between the optimal value of problem $\mathcal{P}_3$ and the objective value of $\mathcal{P}_4$ obtained via the IPO approach is bounded by $2/\mu$, where $\mu>0$ is the hyperparameter of the logarithmic barrier function, if the optimal policy is strictly feasible.
\end{theorem}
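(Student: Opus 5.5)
The plan is to adapt the standard duality-gap bound for logarithmic barrier methods (Boyd and Vandenberghe, Section 11.2), as used in the IPO analysis~\cite{LiuY}, to the case of two constraints. First I rewrite $\mathcal{P}_3$ in standard form. For each $i\neq l$ define $g_i(\pi) = \beta\mathcal{J}_i(\pi') - \mathcal{J}_i(\pi) \le 0$. Then $\mathcal{P}_4$ maximizes
\begin{equation*}
F(\pi)=\mathcal{J}_l(\pi)+\frac{1}{\mu}\sum_{i\neq l}\log\big(-g_i(\pi)\big).
\end{equation*}
Let $\pi^\ast_\mu$ be the maximizer of $\mathcal{P}_4$ found by IPO, and let $\pi^\ast$ be the optimal policy of $\mathcal{P}_3$. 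The latter is strictly feasible by assumption, so Slater's condition holds and the barrier problem has interior points.

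The key step is the first-order stationarity of $\pi^\ast_\mu$. Differentiating $F$ gives
\begin{equation*}
\nabla\mathcal{J}_l(\pi^\ast_\mu) - \sum_{i\neq l}\lambda_i^\ast\,\nabla g_i(\pi^\ast_\mu)=0,
\qquad
\lambda_i^\ast=\frac{1}{\mu\,(-g_i(\pi^\ast_\mu))}>0 .
\end{equation*}
This is exactly the stationarity condition of the Lagrangian $\mathcal{L}(\pi,\lambda)=\mathcal{J}_l(\pi)-\sum_i\lambda_i g_i(\pi)$ at $\lambda^\ast$, so $\pi^\ast_\mu$ maximizes $\mathcal{L}(\cdot,\lambda^\ast)$. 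The dual function is $d(\lambda^\ast)=\sup_\pi\mathcal{L}(\pi,\lambda^\ast)$. Weak duality gives $\mathcal{J}_l(\pi^\ast)\le d(\lambda^\ast)$. Evaluating the Lagrangian at $\pi^\ast_\mu$ and using $\lambda_i^\ast g_i(\pi^\ast_\mu)=-1/\mu$ for each $i$ gives
\begin{equation*}
d(\lambda^\ast)=\mathcal{J}_l(\pi^\ast_\mu)+\sum_{i\neq l}\frac{1}{\mu}=\mathcal{J}_l(\pi^\ast_\mu)+\frac{2}{\mu},
\end{equation*}
since exactly two constraints are present. Combining the two yields $0\le\mathcal{J}_l(\pi^\ast)-\mathcal{J}_l(\pi^\ast_\mu)\le 2/\mu$. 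The lower bound holds because $\pi^\ast_\mu$ is feasible for $\mathcal{P}_3$: the barrier keeps each $g_i<0$.

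The main obstacle is the claim that a stationary point of $\mathcal{L}(\cdot,\lambda^\ast)$ is its global maximizer. This needs concavity of $\mathcal{J}_l$ and convexity of the $g_i$, which does not hold in raw policy-parameter space. I would handle it by working in the space of discounted state-action occupancy measures. There every $\mathcal{J}_i$ is linear, and the feasible set of occupancy measures is a convex polytope. Hence $\mathcal{P}_3$ becomes a linear program, the Lagrangian is concave, and the stationarity argument (taken over occupancy measures) is rigorous. Strong duality then follows from Slater's condition.

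Finally, I note that the log argument is well defined because the returns are normalized to be positive, as stated before $\mathcal{P}_4$. I also note that the gap vanishes as $\mu\to\infty$, consistent with the remark on the role of $\mu$.
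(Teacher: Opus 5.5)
Your proposal is correct and takes the same route as the paper. The paper's proof simply cites the standard $m/\mu$ duality-gap bound for logarithmic barrier methods from Boyd and Vandenberghe with $m=2$, and you reproduce that argument (dual certificate $\lambda_i^\ast=1/(\mu(-g_i(\pi_\mu^\ast)))$, weak duality, feasibility of $\pi_\mu^\ast$); unlike the paper, you also supply the convexity this bound requires by passing to occupancy measures. One small refinement to that step: over the occupancy set the barrier maximizer $\nu_\mu^\ast$ may lie on a face, so the correct first-order condition is $\langle\nabla F(\nu_\mu^\ast),\nu-\nu_\mu^\ast\rangle\le 0$ for all feasible $\nu$ rather than $\nabla F=0$, and this still suffices because your Lagrangian $\mathcal{L}(\cdot,\lambda^\ast)$ is linear in $\nu$ with the same gradient, hence is maximized at $\nu_\mu^\ast$.
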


\begin{proof}
	According to standard analyses of interior-point method in~\cite{Boyd},  
	the maximum gap introduced by the logarithmic barrier reformulation in IPO approach is bounded by $m/\mu$,  
	where $m$ is the number of constraints in the original problem.  
	In our case, $\mathcal{P}_3$ contains $m=2$ constraints.  
	Therefore, the maximum gap does not exceed $2/\mu$.
\end{proof}


\begin{algorithm}[!t]
	\caption{C-MOPPO Algorithm}\label{CMOPPO}
	\KwIn{Number of Pareto extension steps $K$, number of constrained policy optimization steps $K'$, preference vector set $\Omega$}
	\KwOut{Solution set $\Psi$}
	\tcp{Pareto Initialization}
	\For{each preference vector $\boldsymbol{\omega}_i \in \Omega$}{
		Solve scalarized reward task using Alg.~\ref{AlgPPO} with $\boldsymbol{\omega}_i$\;
	}
	Obtain initial solution set $\Psi_{\text{init}}$ from all trained policies\;
	Set $\Psi \leftarrow \Psi_{\text{init}}$\;
	
	\tcp{Policy Selection}
	Apply Alg.~\ref{AlgSelection} to obtain extension set $\Psi_{\text{ext}} \subseteq \Psi$\;

	\tcp{Pareto Extension}
	\For{iteration $= 1,\dots,K / K'$}{
		\For{each policy $\pi \in \Psi_{\text{ext}}$}{
			Initialize policy $\pi_{0} = \pi$\;
			\For{$k = 1,\dots,K'$}{
				Update policy $\pi_k$ for each objective by maximizing Eq.~\eqref{IPO}\;
				Store policy $\pi_{k}$ into $\Psi$\;
			}
		}
		\tcp{Policy Selection}
		Obtain new $\Psi_{\text{ext}}$ by applying Alg.~\ref{AlgSelection} to~$\Psi$\;
	}
\end{algorithm}
To this end, the whole algorithm, referred to as C-MOPPO, is summarized in Algorithm~\ref{CMOPPO}.
\textcolor{blue}{For practical deployment, the C-MOPPO agent operates on the ES as a centralized scheduler. 
The ES performs instantaneous mode selection and resource allocation via lightweight neural network forward passes, while executing the policy fine-tuning asynchronously in the background.
With this deployment, transmitting the required state information (i.e., $Q^t_n$ and $D^t_n$) from each edge device incurs negligible communication overhead.}
\textcolor{blue}{
It is worth noting that 
the adoption of C-MOPPO is motivated by our formulated MOO problem.
Specifically, within the considered FEEL system, performance objectives exhibit asymmetric sensitivities, where further optimization of one objective may trigger disproportionate degradation in others.  
C-MOPPO addresses this via a constraint-aware Pareto extension that bounds relative performance degradation, acting as a dynamic safety net against severe policy fluctuations and ensuring the smooth, stable policy improvement in FEEL deployments.
}

\begin{proposition}
	The total time complexity of the proposed C-MOPPO algorithm is $O(	(n_{\text{epo}}|\Omega| + jKN_{\text{ext}})(\sum_{y=1}^{Y+1}M_{y-1} M_{y})
	) $,
	where $n_{\text{epo}}$ and $|\Omega|$ are the numbers of training epochs and learning tasks in the Pareto initialization stage, respectively. 
	$j$ is the number of objectives, 
	$K$ and $N_{\text{ext}}$ are the numbers of Pareto extension steps and extension policies, respectively.
	$Y$ represents the number of fully connected layers in the neural network, and $M_{y}$ is the number of neurons in the $y$-th layer.
\end{proposition}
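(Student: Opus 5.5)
We would prove the complexity bound by counting neural-network passes, since these dominate every other operation in C-MOPPO. The basic unit is the cost of one forward or backward pass through a fully connected network with $Y$ hidden layers. Write $M_0$ for the input dimension (the state size) and $M_{Y+1}$ for the output dimension. Layer $y$ then performs a matrix--vector product of cost $O(M_{y-1}M_y)$. Summing over layers, one pass costs $O\big(\sum_{y=1}^{Y+1}M_{y-1}M_y\big)$, and backpropagation has the same order. Evaluating the actor and critic together changes only the constant, provided they have comparable architectures, which we will assume.

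Next we would bound the Pareto initialization stage (Algorithm~\ref{AlgPPO}). For each of the $|\Omega|$ preference vectors, PPO runs $n_{\text{epo}}$ training epochs. Each epoch consists of trajectory collection, GAE computation, and mini-batch actor and critic updates via Eqs.~\eqref{aupdate} and \eqref{cupdate}. Following the standard convention in DRL complexity analyses, we would treat the episode length $T$ and the mini-batch size as constants absorbed into the per-epoch cost. The vectorized critic outputs $j$ values, but this only affects $M_{Y+1}$, which is already inside the sum. The stage therefore costs $O\big(n_{\text{epo}}|\Omega|\sum_{y}M_{y-1}M_y\big)$.

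For the Pareto extension stage (Algorithm~\ref{CMOPPO}), there are $K/K'$ outer iterations. Each iterates over $N_{\text{ext}}$ selected policies, and for each runs $K'$ constrained updates. Each update is performed for each of the $j$ objectives, the one chosen as $l$ in $\mathcal{P}_4$. One IPO update of Eq.~\eqref{IPO} needs one clipped PPO gradient step plus estimates of $\mathcal{J}_i$ for the two barrier terms. Both come from the same network passes, so each update is again $O(\sum_y M_{y-1}M_y)$. Multiplying gives $(K/K')\cdot N_{\text{ext}}\cdot K'\cdot j = jKN_{\text{ext}}$ network evaluations.

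The policy selection calls to Algorithm~\ref{AlgSelection} must also be shown negligible. Each call sorts $|\Psi|$ points along $3$ objectives, costing $O(|\Psi|\log|\Psi|)$. Since $|\Psi|\le |\Omega|+KN_{\text{ext}}$, this term is dominated by the network costs under the mild assumption that the layer widths dominate logarithmic factors. Adding the two stages yields the stated bound.

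The main obstacle is making this dominance argument and the absorption of $T$ and the batch size into constants explicit and consistent. Our plan is to state these as modelling assumptions at the outset, as is customary in such analyses, rather than derive them.
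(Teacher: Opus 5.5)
Your proposal is correct and follows essentially the same route as the paper: a per-update network cost of $O(\sum_{y=1}^{Y+1}M_{y-1}M_y)$, multiplied by $n_{\text{epo}}|\Omega|$ updates in the initialization stage and $(K/K')\cdot N_{\text{ext}}\cdot K'\cdot j = jKN_{\text{ext}}$ updates in the extension stage, with policy selection treated as negligible. The paper just asserts that negligibility and cites \cite{LiJ} for the per-update cost, whereas you derive both; however, Algorithm~\ref{AlgSelection} is called $K/K'+1$ times, so a sufficient condition for your dominance argument is roughly $(K/K')\log|\Psi| = O(\sum_{y}M_{y-1}M_y)$, not merely that the layer widths dominate a single logarithmic factor.
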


\begin{proof}
	In Algorithm~\ref{CMOPPO}, the time complexity primarily comes from policy generation steps that involve neural network training (i.e., steps 2 and 11 in Algorithm~\ref{CMOPPO}).  
	Compared to these steps, other operations (e.g., steps 6 and 15 in Algorithm~\ref{CMOPPO}) are considered negligible.
	
	Note that for a neural network with $Y$ fully connected layers, where the $y$-th layer contains $M_{y}$ neurons, the time complexity of training is $O(\sum_{y=1}^{Y+1}M_{y-1} M_{y})$~\cite{LiJ}.
	As shown in Algorithm~\ref{AlgPPO}, PPO training in the Pareto initialization stage iteratively optimizes each learning task defined by the preference vector in $\Omega$ for $n_{\text{epo}}$ epochs. 
	Thus, the time complexity of Pareto initialization stage is $O(n_{\text{epo}}|\Omega| \sum_{y=1}^{Y+1}M_{y-1} M_{y})$.
	In the Pareto extension stage, the same network architecture are used but with different training times.
	Specifically, neural networks are updated $K$ times for $N_{\text{ext}}$ extension policies across $j$ objectives.
	Therefore, the time complexity of Pareto extension stage is $O(jKN_{\text{ext}} \sum_{y=1}^{Y+1}M_{y-1} M_{y})$.
	
	In summary, the total time complexity of the proposed C-MOPPO algorithm can be expressed as $O(
	(n_{\text{epo}}|\Omega| + jKN_{\text{ext}})(\sum_{y=1}^{Y+1}M_{y-1} M_{y})
	)$.
\end{proof}

\section{Performance Evaluation}\label{resu}
In this section, extensive experiments are conducted to evaluate the performance and advantages of the C-MOPPO algorithm.

\subsection{Simulation Settings}
We consider an FEEL system comprising a set of edge devices. 
Similar to~\cite{ChenX}, each device is equipped with a learning model whose update size is set to $G = 200$~KB. 
The CPU cycle requirements for training and inference on one data sample, denoted by $\chi_{n,\text{tr}}$ and $\chi_{n,\text{in}}$, are independently drawn from a uniform distribution $\mathcal{U}(10, 50)$.
Furthermore, to simulate heterogeneous request-training sample conversion, the mapping function $\mathscr{F}_{n}(\cdot)$ is configured such that half of the edge devices follow a one-to-one mapping between inference requests and training samples, while the remaining devices are evenly split between a many-to-one and a one-to-many mapping scheme.
The neural networks of the proposed algorithms are implemented using PyTorch~1.8.
Both the actor and critic networks in the PPO architecture consist of four fully connected layers, with each layer comprising 128 neurons.
\textcolor{blue}{The penalty constants are set to $[\rho_1, \rho_2, \rho_3] = [-0.1, -8, -50]$, which are strictly lower than the feasible instantaneous rewards, so that any constraint violation will trigger a sharp negative signal, guiding the policy toward the feasible region.}
\textcolor{blue}{
    The total training budget for the C-MOPPO algorithm is set to $1 \times 10^4$, including $6 \times 10^3$ training steps for the Pareto initialization and $4 \times 10^3$ training steps for the Pareto extension.}
Table~\ref{params} presents the remaining parameters and the detailed hyperparameters employed in the algorithm, most of which are commonly used in the existing literature~\cite{ChenX, SunY, ChenZ}.
\textcolor{blue}{All simulations are conducted on a computing platform equipped with a 16-core Intel Xeon CPU, 48GB RAM, and an NVIDIA GeForce RTX 3090 GPU rented from Vast.ai\footnote{https://vast.ai/.}.
For practical edge deployment, we consider a heterogeneous testbed consisting of NVIDIA Jetson TX2, Xavier NX, and Orin NX nodes.}
\textcolor{blue}{The policies generated by our proposed algorithm have been verified on these onboard GPUs.}

\begin{table}[!tbh]
	\begin{center}
		\caption{Simulation Parameters}
		\label{params}
		\resizebox{\linewidth}{!}{
			\begin{tabular}{c|c|c|c} 
				\toprule 
				\textbf{Parameter} & \textbf{Value} & \textbf{Parameter} & \textbf{Value}\\
				\midrule 
				Total rounds $T$ & $100$ & Duration threshold $L_{\max}$ & $5$ sec \\
				Bandwidth $B$ & $1$ MHz & Effective capacitance $e_n$ & $10^{-27}$ \\ 
				Discount factor $\gamma$ & $0.99$ & Clipping parameter $\epsilon$ & $0.2$\\
				Actor learning rate $\xi_A$ & $10^{-5}$ & Critic learning rate $\xi_C$ & $10^{-5}$ \\ 
				Log barrier parameter $\mu$ & $80$ & Constrained optimization steps $K'$ & $100$\\
                \textcolor{blue}{Preference set size $|\Omega|$} & \textcolor{blue}{$10$} & \textcolor{blue}{Random seeds} & \textcolor{blue}{$10$} \\
				\bottomrule 
			\end{tabular}
		}
	\end{center}
\vspace{-0.2in}
\end{table}

\begin{figure*}[!tbh]
	\centering
	\includegraphics[width=1\linewidth]{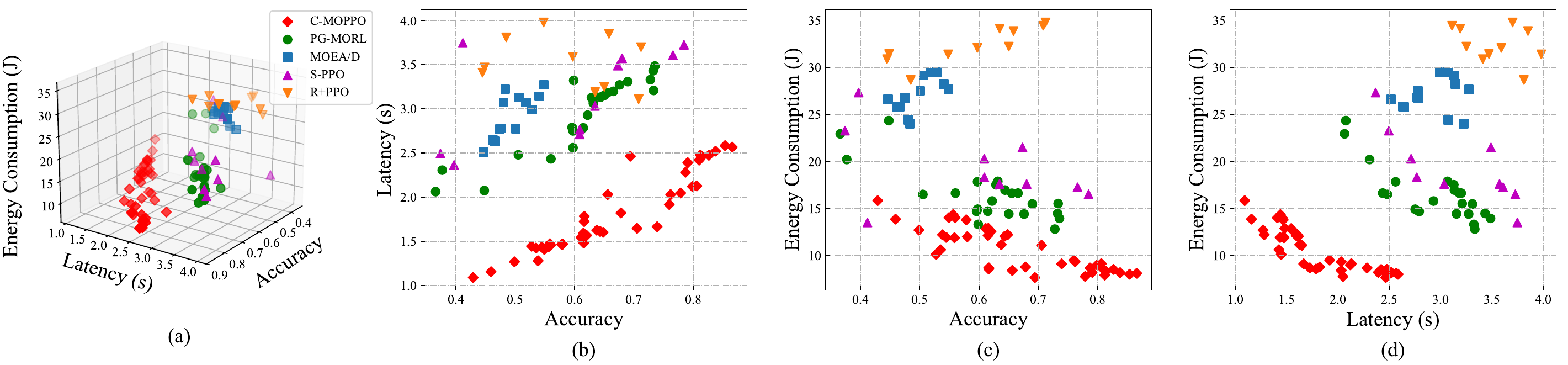}
	\caption{Pareto fronts obtained by C-MOPPO, PG-MORL, MOEA/D, S-PPO, and R+PPO algorithms. 
	}
	\label{PF}
	\vspace{-0.1in}
\end{figure*}

\subsection{Performance Indicators and Baselines}
To measure the quality of the approximated Pareto set, we consider two widely used performance indicators: hypervolume (HV) and sparsity (SP).
The HV is a comprehensive metric reflecting the convergence, spread, and homogeneity of the Pareto set.
The SP metric reflects the density of the Pareto set.
Specifically, given a Pareto approximation set $\Psi \subset \mathbb{R}^{(3)}$ and a reference point $\boldsymbol{z}_{\text{ref}} \in \mathbb{R}^{(3)}$, the HV and SP metrics are respectively calculated as  
\begin{equation}
	HV_{\boldsymbol{z}_{\text{ref}}}(\Psi) = \Lambda\big( \cup_{\boldsymbol{z} \in \Psi} \{ \boldsymbol{z}' \in \mathbb{R}^{(3)} \mid \boldsymbol{z} \succ \boldsymbol{z}' \succ  \boldsymbol{z}_{\text{ref}} \} \big),
\end{equation}
\begin{equation}
	SP(\Psi) = \frac{1}{|\Psi| -1} \sum_{i=1}^{3}\sum_{l=1}^{|\Psi|-1}\big( \tilde{Z}_i(l+1) - \tilde{Z}_i(l) \big)^2,
\end{equation}
where $\Lambda(\cdot)$ denotes the $3$-dimensional Lebesgue measure of a set, and $\tilde{Z}_i(l)$ is the $l$-th value in sorted list for the $i$-th objective values in $\Psi$.
Briefly, a higher HV value and a lower SP value indicate better performance of the Pareto approximation set.

To demonstrate the superiority of the C-MOPPO approach, several baseline algorithms are implemented for comparison, including the prediction-guided multi-objective reinforcement learning algorithm (PG-MORL)~\cite{XuJ} and the multi-objective evolutionary algorithm based on decomposition (MOEA/D)~\cite{moea}. 
Additionally, to validate our system designs, we implement two single-objective optimization (SOO) baselines: scalarized-PPO (S-PPO) and PPO with random mode selection (R+PPO).

$\bullet$ {PG-MORL}: This algorithm employs a prediction-guided evolutionary reinforcement learning framework, where in each generation, an analytical model is used to estimate the expected improvement of each policy along multiple objective directions, followed by solving an optimization problem to select the most promising policies and directions for Pareto front improvement.

$\bullet$ {MOEA/D}: This algorithm decomposes the multi-objective problem into multiple scalar subproblems, each of which is optimized using information from its neighboring subproblems to lower the computational complexity.

$\bullet$ {S-PPO}: This approach converts the original MOO problem into multiple SOO problems using $10$ different well-designed weights, with each scalarized problem then solved separately using the PPO algorithm. 

$\bullet$ {R+PPO}: This method adopts a similar scalarization approach as S-PPO but separates mode selection from resource allocation. 
Specifically, edge devices randomly select their modes in each round, and the PPO algorithm is utilized for the communication and computation resource allocation.

\textcolor{blue}{Note that for fair comparison, we fixed the total training budget at $1 \times 10^4$ PPO training steps for C-MOPPO and all baseline algorithms. 
The training steps allocated to each weight of S-PPO and R+PPO are set to $1/10$ of the total steps used by the multi-objective
algorithms.}



\subsection{Performance Evaluations}
Fig.~\ref{PF} illustrates the Pareto fronts achieved by C-MOPPO and four baselines.
Note that while S-PPO and R+PPO are scalarization-based methods, their solutions with different weight combinations are plotted in the multi-objective space to serve as approximations of Pareto fronts for comparison.
Fig.~\ref{PF}(a) presents the three-dimensional distribution of the policies generated by different algorithms in the objective space, and Fig.~\ref{PF}(b)-(d) provides pairwise projections for a clearer view of the trade-offs between objectives.
\textcolor{blue}{These metrics represent the average values formulated in Eqs.~\eqref{o1}--\eqref{o3}.}
It can be observed that the C-MOPPO algorithm achieves the most comprehensive coverage of the Pareto front, generating a well-distributed set of solutions that deliver superior trade-offs across the three considered objectives.
The PG-MORL also demonstrates competitive performance due to its reinforcement learning foundation, but shows noticeable gaps with C-MOPPO, particularly in regions emphasizing the latency objective.
\textcolor{blue}{This gap primarily occurs because PG-MORL relies on a prediction model to estimate objective improvements. 
For highly complex and tightly coupled metrics such as inference latency in the FEEL system, the prediction tends to fall into local optima, degrading overall algorithm performance.
Unlike PG-MORL, C-MOPPO achieves superior performance by utilizing a rigorous constrained policy optimization approach in the Pareto extension, which theoretically bounds the performance degradation of other objectives and ensures mathematically stable exploration.}

\textcolor{blue}{
MOEA/D and S-PPO exhibit significantly inferior performance to C-MOPPO.
Specifically, the performance gap of MOEA/D arises from its reliance on heuristic genetic operators, which require substantial computational effort to generate populations in the decision space and thus result in clustered solutions with poor Pareto front diversity. 
In contrast, C-MOPPO leverages the policy gradient framework of PPO with neural networks to capture complex state–action mappings, achieving broader and more uniform Pareto front coverage.
Although S-PPO exhibits a broader distribution than MOEA/D, its reliance on fixed scalarization inherently restricts it to specialized policies, leading to missed solutions in non-convex regions.
C-MOPPO overcomes this by alternating between the policy selection stage and the Pareto extension stage, driving the algorithm to iteratively expand into the sparse, unexplored regions of the Pareto front.}
Finally, R+PPO delivers the worst Pareto front, with significantly higher energy consumption and latency compared to C-MOPPO. This occurs because random mode selection may schedule edge devices to train when they have accumulated inference requests, and perform inference when fresh training data is available. This validates the importance of our joint optimization framework for guaranteeing the FEEL system performance.

Fig.~\ref{number} presents the impact of the number of edge devices on algorithm performance in terms of two metrics. 
For the calculation of HV value, we select the reference point $\boldsymbol{z}_{\text{ref}} = (0, 5, 44)$, which can bound the three optimization objectives across all experiments.
As shown in Fig.~\ref{number}(a), the HV values of all algorithms decrease as the number of edge devices increases, reflecting the increasing difficulty in searching trade-offs in a more complex decision space. 
Among all methods, our C-MOPPO consistently achieves the highest HV values, demonstrating superior scalability and robustness. 
PG-MORL ranks second in most cases but is surpassed by MOEA/D when the number of edge devices reaches $30$. This performance drop can be attributed to the expansion of the action space, which increases the learning difficulty for reinforcement learning-based methods. 
MOEA/D and S-PPO exhibit similar performance levels, both inferior to C-MOPPO and PG-MORL in most cases. This gap can be attributed to their inherent scalarization strategies, which limit effective exploration in complex, dynamic environments.
R+PPO consistently performs the worst in terms of the HV metric, indicating its limited capacity to explore Pareto-optimal solutions.
Fig.~\ref{number}(b) shows the corresponding SP values, where all methods exhibit increasing trends as the number of devices grows. The C-MOPPO maintains the lowest SP values across all settings.
PG-MORL initially shows competitive performance but is outperformed by the MOEA/D when the number of devices reaches $20$ or higher. This may result from the degradation of PG-MORL’s prediction model under high-dimensional action spaces, which prevents it from accurately identifying promising directions.
Both S-PPO and R+PPO exhibit higher SP values than the other methods, with S-PPO showing the worst performance. This reflects that approximating Pareto fronts using only a limited set of weighted single-objective problems is ineffective.
\begin{figure}[t]
	\centering
	\includegraphics[width=1.015\linewidth]{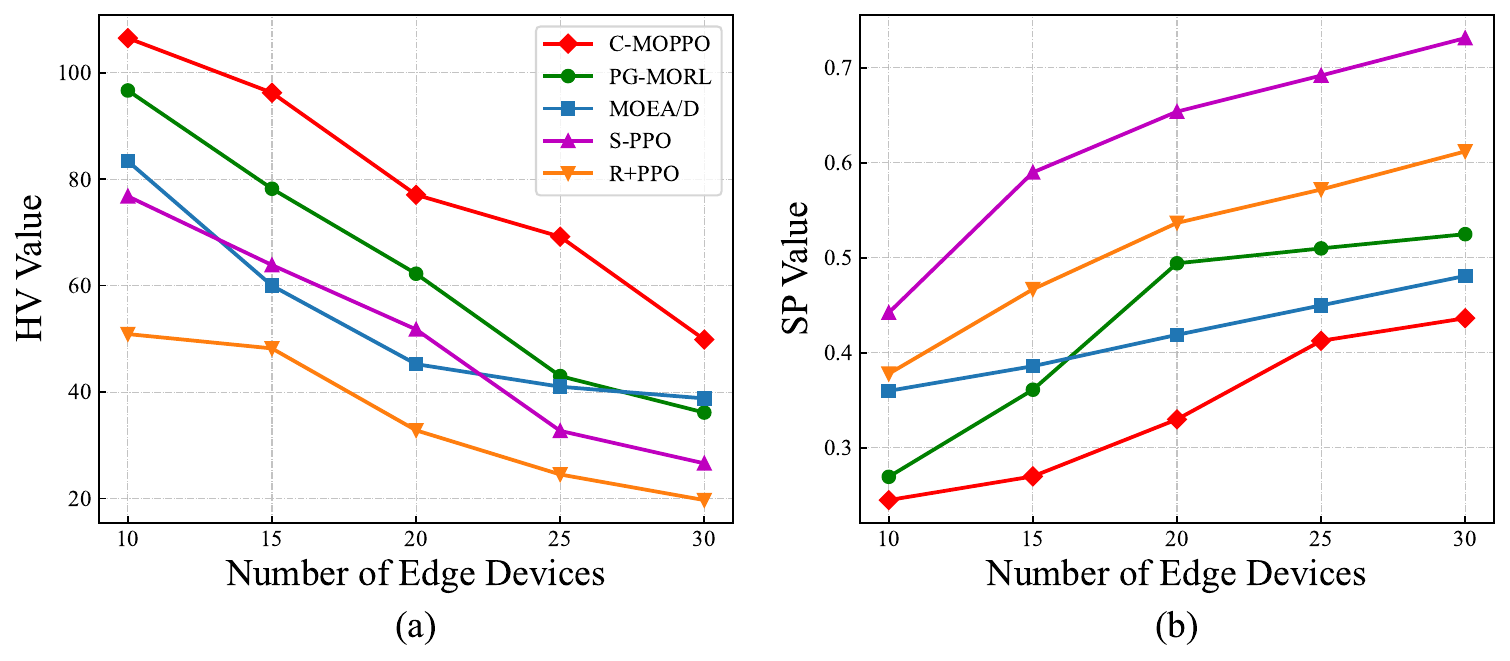}
	\caption{Performance comparison with different numbers of devices $N$ in terms of (a) HV Value; (B) SP Value.}
	\label{number}
\end{figure}

Fig.~\ref{lam} illustrates the algorithm performance in terms of HV and SP metrics as the average request arrival rate varies from $2$ to $10$.
As shown in Fig.~\ref{lam}(a), both C-MOPPO, PG-MORL, and S-PPO exhibit a non-monotonic trend in the HV metric, where performance initially improves, subsequently declines. 
This is because, at lower request arrival rates, increased request frequency can provide more training data, improving the inference accuracy. However, at higher arrival rates, the consequent increase in inference latency and energy consumption begins to dominate the objective space, resulting in reduced HV values.
Overall, the C-MOPPO algorithms demonstrate superior performance compared to the other baselines, achieving approximately $16\%$ improvement over the second-best PG-MORL algorithm on average across all scenarios.
After the two leading methods, S-PPO ranks next in HV performance, slightly ahead of MOEA/D, and R+PPO shows the poorest performance.
Regarding the SP metric, as depicted in Fig.~\ref{lam}(b), C-MOPPO maintains consistently low and stable SP values around $0.34$ across all request arrival rates, demonstrating its robust capability to generate well-distributed solutions regardless of workload intensity. 
The other methods, particularly S-PPO and MOEA/D, exhibit a clear upward trend from about $0.40$ to roughly $0.68$ as the request arrival rate increases, indicating the difficulty in maintaining dense solutions under heavier system loads for these algorithms.
\begin{figure}[t]
	\centering    \includegraphics[width=1.015\linewidth]{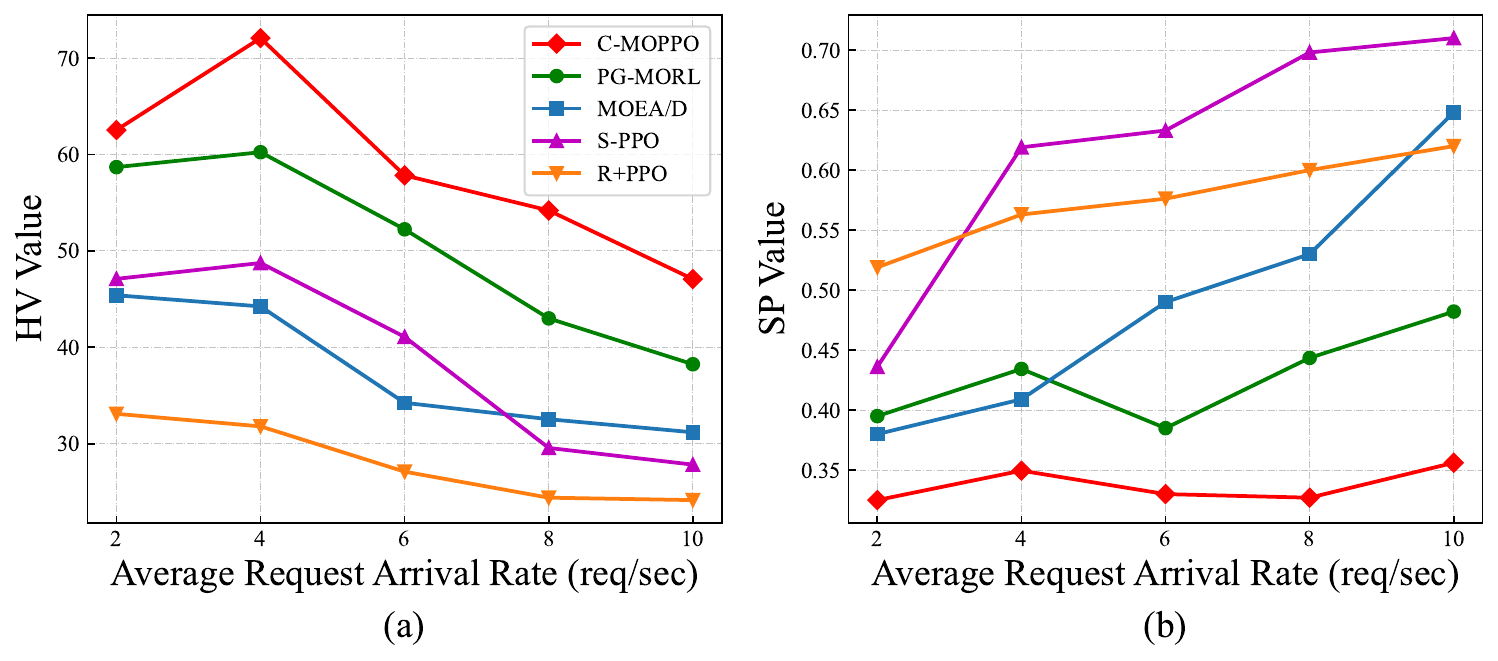}
	\caption{Performance comparison with different average request arrival rates $\bar{\lambda}$ in terms of (a) HV Value; (B) SP Value.}
	\label{lam}
\end{figure}


Fig.~\ref{convergence} examines the convergence behavior of our presented C-MOPPO algorithm compared to PG-MORL and MOEA/D algorithms. 
Since this analysis focuses on the convergence of the HV value for the evolving Pareto set, scalarization-based methods (i.e., S-PPO and R+PPO) are not included because they optimize independent weighted SOO problems rather than a unified Pareto set.
Overall, C-MOPPO demonstrates the fastest and most stable convergence, reaching the highest HV value of approximately $79$ and maintaining consistent performance thereafter, indicating that its solution set effectively converges the Pareto front.
PG-MORL achieves better performance than the MOEA/D method but still falls short of C-MOPPO, converging to around $66$, even though both algorithms start the second stage with a similar initial solution set. This gap can be attributed to its heavy reliance on the inherent prediction model, whose accuracy is insufficient for the complex optimization problem.
MOEA/D converges to substantially lower HV values of around $48$. This poor performance results from the inefficiency of evolutionary algorithms in high-dimensional parameter spaces, which is further exacerbated by the dynamic and uncertain nature of the FEEL system.
\begin{figure}[!t]
	\centering
    \includegraphics[width=0.7\linewidth]{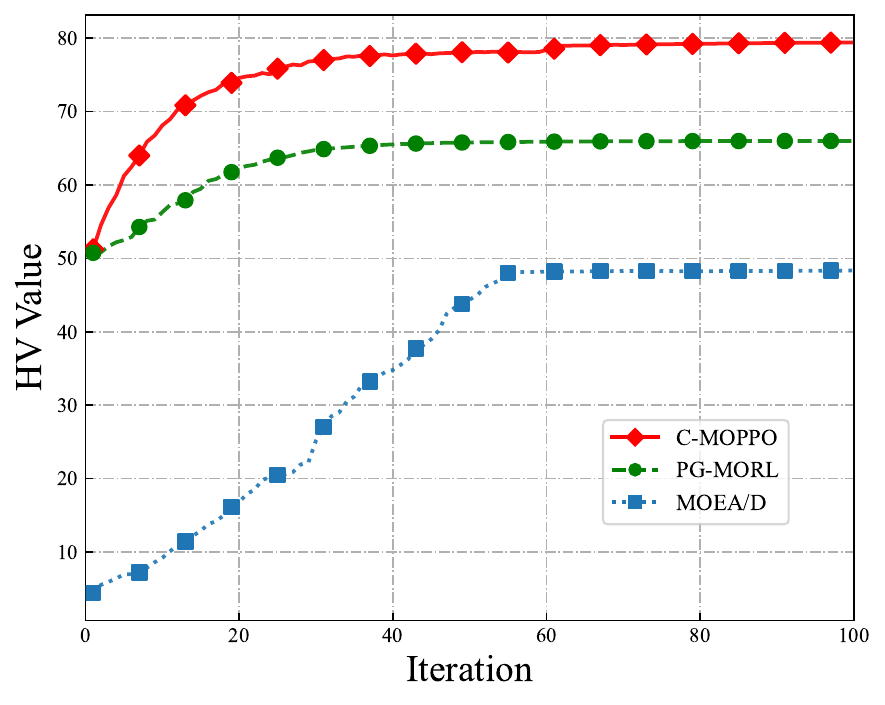}
	\caption{Convergence comparison of C-MOPPO, PG-MORL, and MOEA/D algorithms.}
	\label{convergence}
\end{figure}

\subsection{Sensitivity and Distribution Analysis}
Fig.~\ref{be} shows the effect of varying the return constraint hyperparameters $\beta$ on the quality of the Pareto-optimal solutions generated by the C-MOPPO algorithm.
When $\beta$ increases from $0.5$ to $0.9$, the HV value improves steadily from $56.6$ to $75.2$, while the SP value decreases from $0.56$ to $0.36$. This indicates that stronger constraints promote better overall performance, leading to higher-quality and more diverse solutions. This improvement is intuitive, as a larger $\beta$ effectively limits the degradation of secondary objectives when optimizing a primary one, thereby improving coverage of the Pareto front.


\textcolor{blue}{
Since inference latency is critical for practical QoS, we select the representative latency-oriented policy from each algorithm's Pareto set for further distribution analysis.
Fig.~\ref{cdf} presents the cumulative distribution function (CDF) of inference latency under the selected policies, collected across all rounds of the simulation.
As observed, the CDF curve of C-MOPPO exhibits a sharp ascent, rapidly reaching $1.0$ at a significantly lower latency bound. 
PG-MORL follows next with a moderate distribution, while MOEA/D and S-PPO exhibit similar but inferior trends, both revealing significant latency variations.
R+PPO demonstrates much flatter curves with severe long-tail effects, indicating the random mode selection frequently forces inference requests to wait in training mode, causing severe latency degradation.
By strategically switching modes, C-MOPPO avoids such delays and provides deterministic latency bounds, ensuring the strict QoS constraints.}

\begin{figure}[!t]
    \begin{minipage}{0.52\linewidth}
        \centering
        \includegraphics[width=\linewidth]{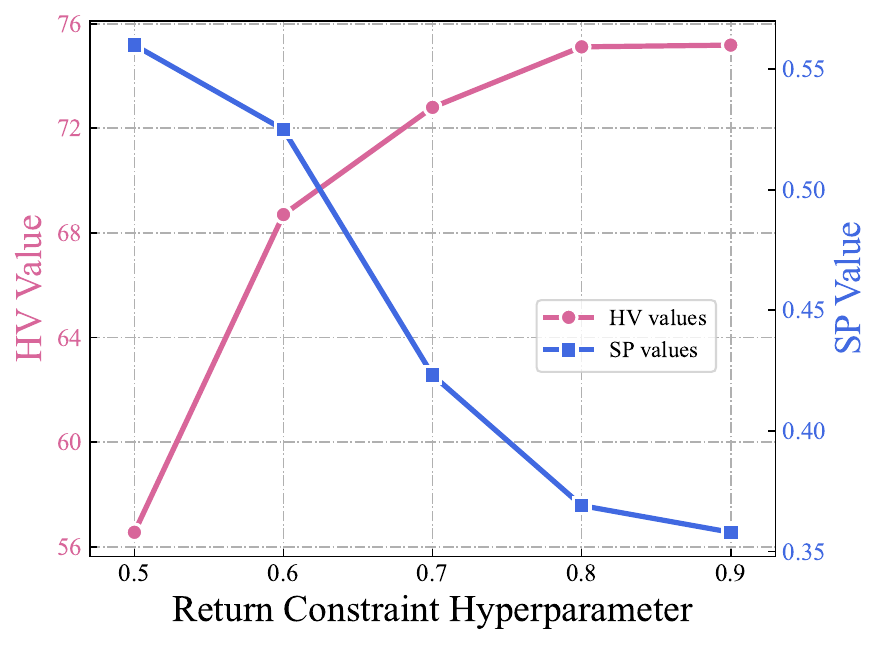}
        \caption{HV and SP values with different return constraint hyperparameters $\beta$.}
        \label{be}
    \end{minipage}
    \hfill 
    \begin{minipage}{0.465\linewidth}
        \centering
        \includegraphics[width=\linewidth]{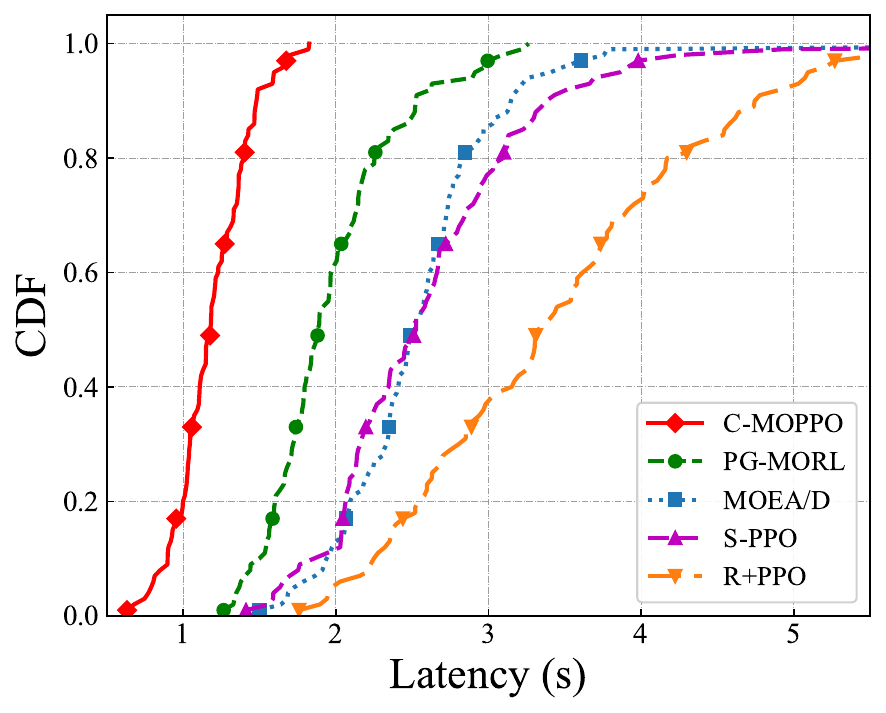}
        \caption{Comparison of latency distributions across different algorithms.}
        \label{cdf}
    \end{minipage}
\end{figure}

\textcolor{blue}{\subsection{Evaluation on Standard Classification Task}\label{real}
We conduct comprehensive evaluations using the real-world CIFAR-10 dataset~\cite{Krizhevsky}, which consists of $60,000$ color images across ten classes. This dataset was used to generate the inference task streams following the methodology in~\cite{WangH2}. A standard CNN with three convolutional blocks and a lightweight fully connected classifier was deployed across edge devices for FEEL training and inference.
\subsubsection{Validation of the Accuracy Modeling}
We evaluate the fidelity of our proposed accuracy mapping function by comparing it with a theoretical Oracle variant of C-MOPPO.
Specifically, during the training phase, the DRL agent in the original C-MOPPO is guided by the reward estimated via $\mathscr{G}(\cdot,\cdot)$. 
To demonstrate robustness, we instantiate $\mathscr{G}(\cdot,\cdot)$ into three diverse variants: the default logarithmic-exponential function defined in Section~\ref{acc}, a power-law-exponential function ( $\mathscr{G}(x,y) = \kappa_1 x^{\kappa_2}\cdot e^{-\kappa_3 y}$), and a logarithmic-rational function ($\mathscr{G}(x,y) = \kappa_1 \log (1+\kappa_2x)\cdot \frac{1}{1+\kappa_3y}$), where the parameters $\kappa_i$ are specifically set for each variant.
In contrast, the Oracle variant operates under a hypothetical assumption of perfect future knowledge so that it completely bypasses our proxy and relies directly on the true inference accuracy as its reward signal. Fig.~\ref{accV} illustrates the convergence behavior in terms of the HV value (capturing multi-objective trade-offs rather than isolated accuracy).
Crucially, the plotted HV values for all evaluated methods are calculated using the ground-truth inference accuracy of the trained CNN models on the CIFAR-10 dataset.
Several key observations can be drawn from the results. 
First, despite lacking access to the true inference accuracy during optimization, our C-MOPPO successfully converges to a Pareto front that exhibits a final performance gap of less than $5\%$ compared to the Oracle. 
Second, the negligible $1.31\%$ performance difference among the different variants of $\mathscr{G}(\cdot, \cdot)$ proves that the algorithm's superiority is highly robust to the specific mathematical structure of the accuracy mapping function. 
Finally, it is noteworthy that the mapping-guided C-MOPPO algorithms demonstrate faster and more stable convergence behavior than the Oracle. 
This verifies that our mapping function acts as an effective reward shaping mechanism, providing the DRL agent with smooth signals that accelerate policy learning without compromising the correct optimization direction.
}

\begin{figure}[!t]
	\centering    \includegraphics[width=0.7\linewidth]{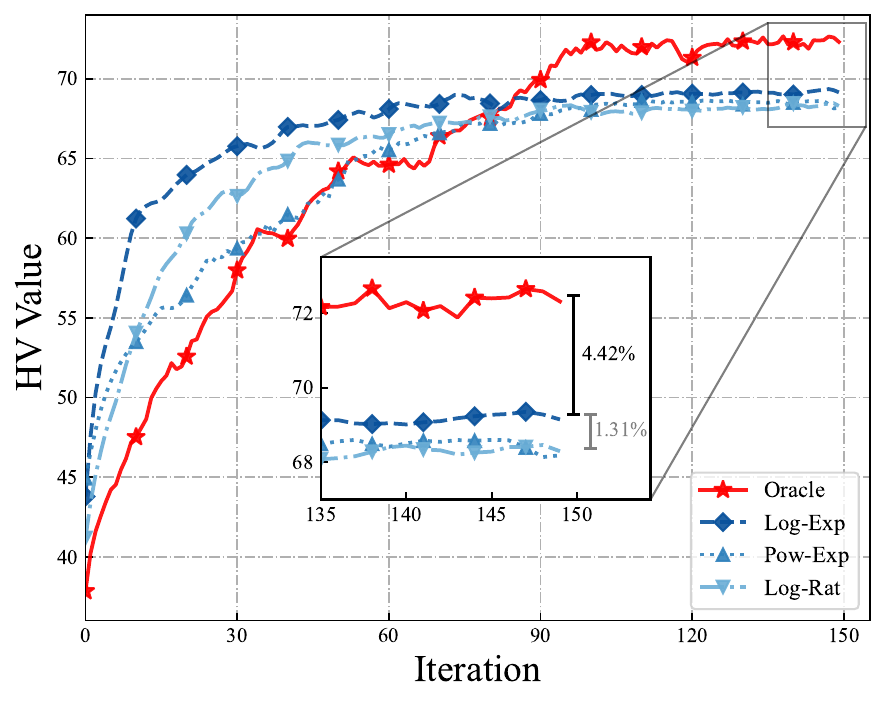}
	\caption{Convergence behavior of different accuracy mapping functions versus the Oracle.}
	\label{accV}
\end{figure}

\begin{table*}[!th]
\centering
\caption{Detailed Performance of Top-3 Preference-Oriented Policies across Different Algorithms}
\label{top3}
\begin{tabular}{c|ccc|ccc|ccc}
\toprule
\multirow{2}{*}{\textbf{Algorithms}} & \multicolumn{3}{c|}{\textbf{Accuracy-Oriented Policies}} & \multicolumn{3}{c|}{\textbf{Latency-Oriented Policies}} & \multicolumn{3}{c}{\textbf{Energy-Oriented Policies}} \\
\cmidrule{2-10}
 & \textbf{Acc. (\%)} & Lat. (s) & Energy (J) & Acc. (\%) & \textbf{Lat. (s)} & Energy (J) & Acc. (\%) & Lat. (s) & \textbf{Energy (J)} \\
\midrule

C-MOPPO 
 & \makecell{\textbf{89.1} \\ \textbf{88.6} \\ \textbf{88.2}} 
 & \makecell{3.61 \\ 3.82 \\ 3.50} 
 & \makecell{16.8 \\ 16.1 \\ 17.2} 
 & \makecell{72.9 \\ 75.1 \\ 75.7} 
 & \makecell{\textbf{2.68} \\ \textbf{2.75} \\ \textbf{2.84}} 
 & \makecell{18.8 \\ 16.4 \\ 17.0} 
 & \makecell{69.4 \\ 77.2 \\ 78.1} 
 & \makecell{3.52 \\ 3.04 \\ 3.47} 
 & \makecell{\textbf{14.8} \\ \textbf{14.9} \\ \textbf{15.2}} \\
\midrule
PG-MORL 
 & \makecell{87.2 \\ 85.8 \\ 84.3} 
 & \makecell{4.48 \\ 4.20 \\ 4.43} 
 & \makecell{20.5 \\ 23.2 \\ 22.6} 
 & \makecell{67.3 \\ 74.5 \\ 67.2} 
 & \makecell{3.67 \\ 3.69 \\ 3.92} 
 & \makecell{22.9 \\ 25.1 \\ 20.7} 
 & \makecell{65.2 \\ 69.8 \\ 74.2} 
 & \makecell{4.13 \\ 4.12 \\ 4.40} 
 & \makecell{19.1 \\ 19.6 \\ 20.2} \\
\midrule
MOEA/D 
 & \makecell{86.2 \\ 83.9 \\ 81.1} 
 & \makecell{4.47 \\ 4.10 \\ 3.99} 
 & \makecell{27.7 \\ 28.5 \\ 32.9} 
 & \makecell{66.5 \\ 74.5 \\ 76.6} 
 & \makecell{3.64 \\ 3.65 \\ 3.79} 
 & \makecell{31.7 \\ 29.1 \\ 28.7} 
 & \makecell{68.4 \\ 86.2 \\ 67.0} 
 & \makecell{4.22 \\ 4.47 \\ 4.07} 
 & \makecell{25.0 \\ 27.7 \\ 28.1} \\
\midrule
S-PPO 
 & \makecell{87.5 \\ 83.8 \\ 82.0} 
 & \makecell{4.78 \\ 4.61 \\ 4.57} 
 & \makecell{24.9 \\ 25.3 \\ 25.6} 
 & \makecell{69.1 \\ 67.2 \\ 70.6} 
 & \makecell{3.96 \\ 4.08 \\ 4.21} 
 & \makecell{30.8 \\ 26.5 \\ 23.9} 
 & \makecell{71.8 \\ 70.6 \\ 87.5} 
 & \makecell{4.74 \\ 4.21 \\ 4.78} 
 & \makecell{23.5 \\ 23.9 \\ 24.9} \\
\midrule
R+PPO 
 & \makecell{76.4 \\ 75.5 \\ 74.8} 
 & \makecell{4.72 \\ 4.11 \\ 4.85} 
 & \makecell{34.7 \\ 34.8\\ 33.8} 
 & \makecell{67.9 \\ 75.5 \\ 68.5} 
 & \makecell{3.98 \\ 4.11 \\ 4.51} 
 & \makecell{31.3 \\ 34.8\\ 34.4} 
 & \makecell{63.4 \\ 65.0 \\ 67.9} 
 & \makecell{4.59 \\ 4.84 \\ 3.98} 
 & \makecell{26.8 \\ 28.1 \\ 31.3} \\
\bottomrule
\end{tabular}
\end{table*}

\textcolor{blue}{\subsubsection{Detailed Performance Comparison} We detail the performance of the algorithms when prioritizing different system objectives in the implemented classification task. Table~\ref{top3} presents the top-3 preference-oriented policies generated by C-MOPPO and all baselines, evaluated in terms of empirical inference accuracy, latency, and energy consumption.
It can be observed that the proposed C-MOPPO algorithm consistently achieves the best performance across all three preference-oriented settings (highlighted in bold). Under the accuracy-oriented policies, C-MOPPO attains the highest inference accuracy of up to $89.1\%$, significantly outperforming the best baseline (PG-MORL at $87.2\%$). Similarly, when prioritizing latency or energy consumption, C-MOPPO successfully minimizes the respective metrics to $2.68$ s and $14.8$ J, demonstrating its robust capability to push the Pareto front further in any desired optimization direction.
Additionally, the results clearly illustrate the inherent trade-offs among the three objectives. For instance, achieving C-MOPPO's peak accuracy incurs higher latency and energy consumption.
This aligns with our system model: maintaining high accuracy requires frequent training, inevitably leading to higher inference latency and energy costs. Conversely, strictly minimizing latency or energy consumption leads to a corresponding drop in accuracy (e.g., dropping to $69.4\%$ at $14.8$ J).
Overall, the baseline algorithms underperform C-MOPPO. 
While PG-MORL and MOEA/D can find some sub-optimal trade-offs, they fall short of C-MOPPO across all primary metrics. 
A policy overlapping phenomenon is observed in the scalarization-based methods, where identical policies repeatedly emerge under different preference orientations. This redundancy exposes their limited exploration capabilities and their inability to discover a diverse, well-distributed Pareto set like C-MOPPO.
}


 \section{Conclusion}\label{concl}
This paper investigates the joint mode selection, communication, and computation resource allocation problem for edge devices in the FEEL system. 
To capture the practical characteristics of online environments, we designed an efficient conversion mechanism that bridges inference requests and training data for resource-constrained edge devices, 
and formulated a comprehensive accuracy metric that accounts for training data volume, freshness, and model staleness.
Building upon the multi-objective Markov decision process, we developed a constrained multi-objective proximal policy
optimization algorithm, termed C-MOPPO, maximizing inference accuracy while simultaneously minimizing latency and energy consumption. 
Extensive experimental results validate the effectiveness of the proposed C-MOPPO algorithm in generating Pareto-optimal solutions and demonstrate its substantial advantages over existing baselines.

\bibliographystyle{IEEEtran}
\bibliography{IEEEtranbib}

\end{document}